\PassOptionsToPackage{numbers}{natbib}
\def\arxivversion{1}
\documentclass{article}

\usepackage[main,final]{neurips_2026}

\usepackage[utf8]{inputenc}
\usepackage[T1]{fontenc}
\usepackage{hyperref}
\hypersetup{hidelinks}
\usepackage{url}
\usepackage{booktabs}
\usepackage{amsfonts}
\usepackage{amsmath}
\usepackage{amssymb}
\usepackage{mathtools}
\usepackage{microtype}
\usepackage{multirow}
\usepackage{xcolor}
\usepackage{enumitem}
\usepackage{amsthm}
\usepackage{array}
\usepackage{algorithm}
\usepackage{algpseudocode}
\usepackage{wrapfig}
\usepackage{graphicx}
\usepackage{float}
\usepackage{placeins}

\newcommand{\E}{\mathbb{E}}
\newcommand{\R}{\mathbb{R}}
\newcommand{\Ncal}{\mathcal{N}}
\newcommand{\Ycal}{\mathcal{Y}}
\newcommand{\norm}[1]{\left\lVert #1 \right\rVert}
\newcommand{\inner}[2]{\left\langle #1,#2 \right\rangle}
\newcommand{\lin}{\mathrm{lin}}
\newcommand{\FM}{\mathrm{flow}}

\newcommand{\euc}{\mathrm{euc}}

\newcommand{\Unif}{\mathrm{Unif}}

\newcommand{\grad}{\nabla_x}
\newcommand{\divx}{\nabla_x\!\cdot}

\newtheorem{theorem}{Theorem}
\newtheorem{proposition}{Proposition}

\newtheorem{corollary}{Corollary}
\theoremstyle{remark}

\newcommand\numberthis{\addtocounter{equation}{1}\tag{\theequation}}
\algrenewcommand\algorithmicindent{12pt}

\title{Co-Evolving Paths and Flows via Path--Flow Alignment}

\author{
  Zeyu Michael Li\thanks{These authors contributed equally to this work.} \\
  Duke University
  \And
  William Xingxu Chen\footnotemark[1] \\
  Duke University
  \And
  Xiang Cheng \\
  Duke University
}

\begin{document}
\maketitle

\begin{abstract}
We study path--flow alignment as a unified training objective for flow matching. Instead of fixing the interpolation path and learning only the velocity field, we jointly train an endpoint-preserving path network and a flow network using the same alignment loss: the flow learns to match the path velocity, and the path learns to align its velocity to the current flow. Although every fixed learned path defines a valid flow-matching objective, the alignment loss alone is not a reliable criterion for path learning. We identify \emph{path overfitting}, a failure mode in which the alignment loss decreases while sample quality worsens. We find that this failure is associated with low-entropy bottlenecks in the induced probability path, where the learned path routes samples through overly concentrated intermediate marginals. Motivated by this diagnosis, we introduce a stochastic path regularizer that hides part of the source information from the path network while preserving exact endpoints. The resulting regularization gives an explicit entropy floor for the stochastic training-path marginals and empirically suppresses the bottleneck in the learned sampler, making joint path--flow training effective. On ImageNet-$256\times256$ with SiT backbones, our method consistently improves FID across model scales, extends to model-guidance training, and leaves the inference-time architecture and sampler unchanged. Code is available at \url{https://github.com/lizeyu090312/traj_opt_paper}.
\end{abstract}

\section{Introduction}
\label{sec:introduction}

Flow matching learns a generative ODE through a supervised regression problem: sample a source point and a data point, interpolate between them, and train a velocity field to match the tangent of that interpolation. This view underlies flow matching and stochastic interpolants \citep{lipman2022flowmatching,albergo2022building,albergo2025stochasticinterpolantsunifyingframework}, and scales to modern image backbones such as SiT \citep{ma2024sit}. In most large-scale image models this interpolation is fixed in advance, often as the linear path
\[
    \gamma_{\lin}(t;x_0,x_1)=(1-t)x_0+tx_1.
\]
The interpolating path is not a passive implementation detail. It determines both the states on which the model is trained and the vector targets it is asked to predict. Prior work has improved flow matching by changing endpoint couplings, straightening trajectories, using metric geometry, or learning curved paths \citep{liu2022rectified,tong2023improving,pooladian2023multisample,kapusniak2024metricflowmatching,shankar2025learningstraightflows}. We ask a complementary question: 

\begin{quote}
\emph{What makes a path good, and can we learn it jointly with the flow?}
\end{quote}

A natural answer is to \emph{use the flow-matching objective itself as a self-supervising signal for both the path and the flow} -- a good path should make the flow-matching regression problem easier. Motivated by this, we propose \emph{path--flow alignment}: we parameterize a path network $\gamma_\psi$, and jointly train the path network and the flow network on the same path--flow alignment loss. We illustrate how learning the interpolation path changes the trajectories of the resulting flow in a toy example (Figure~\ref{fig:toy-learned-trajectories}).
\vspace*{0pt plus 1fill}

\begin{figure}[!t]
\centering
\begin{minipage}[t]{1\linewidth}
    \centering
    \includegraphics[width=0.9\linewidth]{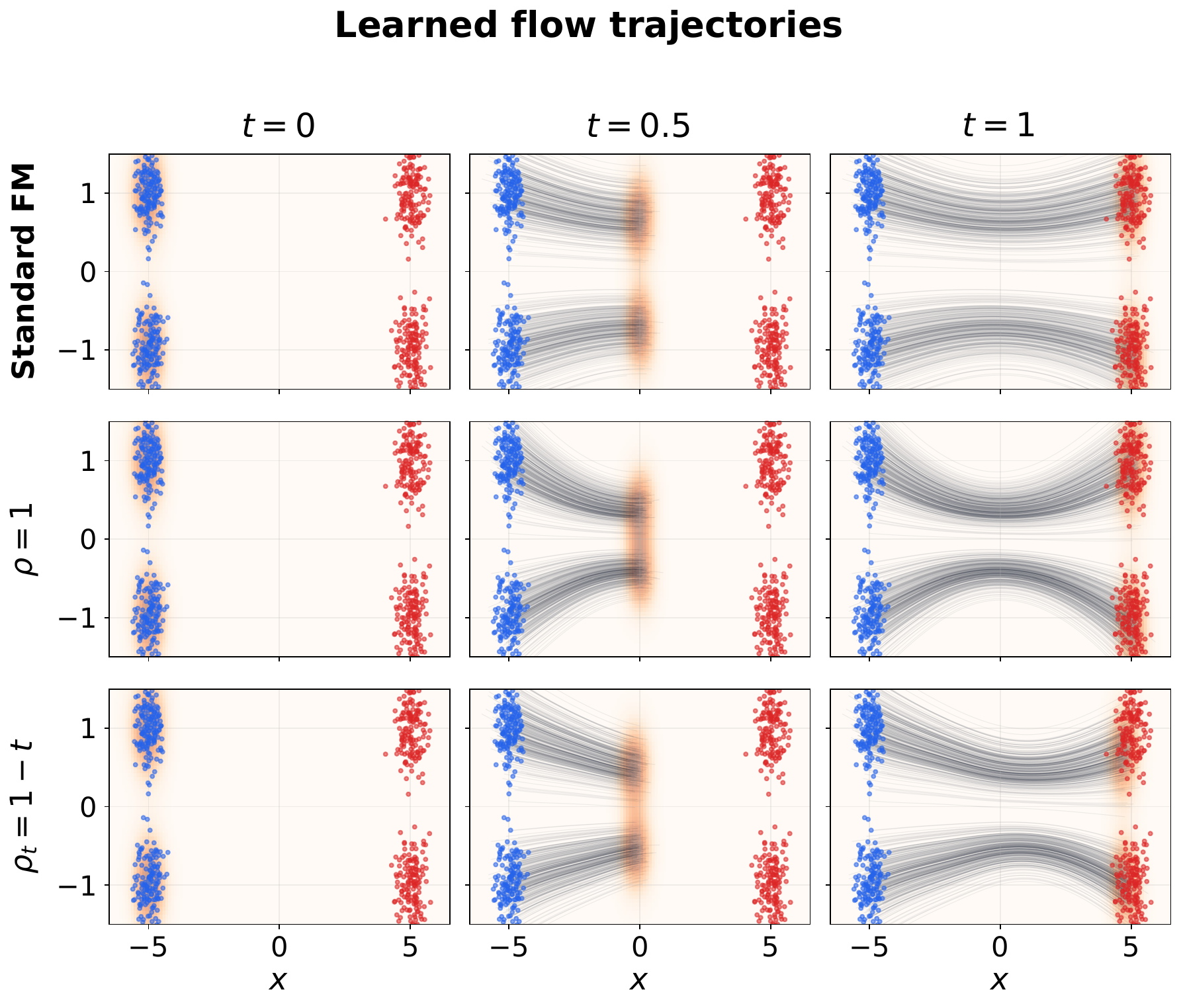}
\end{minipage}
\\
\begin{minipage}[t]{1\linewidth}
    \centering
    \includegraphics[width=0.8\linewidth]{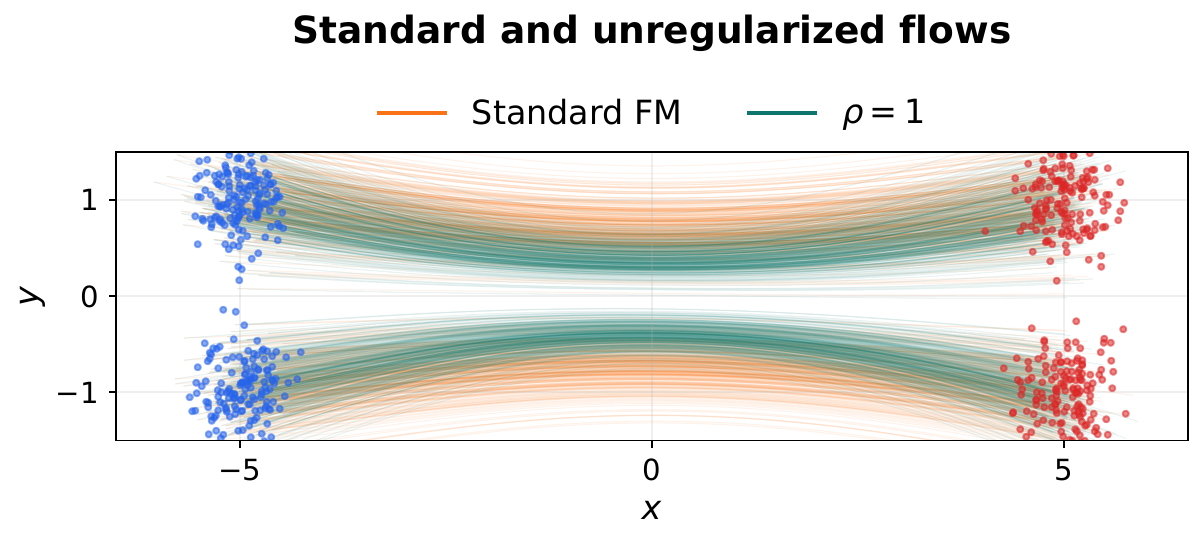}
\end{minipage}
\vspace{-12pt}
\caption{Learned ODE trajectories in a toy mixture example. Top: standard flow matching, path--flow alignment without stochastic regularization ($\rho=1$), and with stochastic regularization ($\rho_t=1-t$), shown at three times. Bottom: standard and unregularized trajectories overlaid. All methods start from the same source samples. Appendix~\ref{app:learned-trajectories} gives further details.}
\label{fig:toy-learned-trajectories}
\end{figure}

For any fixed learned path, we verify that this remains a valid flow-matching problem. Empirically, we observe that this approach quickly improves the flow-matching loss. However, the generation quality soon begins to degrade even as flow-matching loss continues to improve.

We call this \emph{path overfitting}. In our sweeps, such overfitting is consistently accompanied by \emph{entropy collapse} of intermediate marginals -- intuitively, the path network $\gamma_\psi$ is routing most of the flow through a series of low-entropy bottlenecks. This proves our first answer to be incomplete: a good path should not only reduce flow-matching loss, but \emph{do so while avoiding entropy collapse.}

Motivated by this diagnosis, we introduce a simple regularization scheme that applies a Gaussian perturbation to the input of the path network. Theoretically, we show that, under the independent endpoint coupling used here, this regularization establishes an entropy floor on the stochastic training-path marginals. Empirically, we show that this stochastic-regularized path--flow alignment effectively improves the generation quality on the challenging image generation tasks. We also show that the same path--flow alignment template extends beyond standard flow matching by applying it to model-guidance training \citep{tang2025diffusionwithoutcfg}. Overall, our results demonstrate the promise of joint training of path and flow networks, assuming proper regularization.

\subsection{Contributions}
\label{sec:contributions-positioning}

Our main contributions are as follows.

\begin{enumerate}[leftmargin=1.5em,itemsep=2pt,topsep=2pt]
    \item \textbf{Learning interpolation paths for flow matching.}
    We introduce \emph{path--flow alignment} (Algorithm~\ref{alg:simple_alternating_minimization}), a training-time method for learning endpoint-preserving neural paths jointly with a flow model. Section~\ref{sec:flow-guided-interpolation} defines the construction; Proposition~\ref{prop:fm-projection} gives the correctness guarantee.

    \item \textbf{Identifying Path overfitting and low entropy bottlenecks.}
    We identify a failure mode of naive path--flow alignment, which we call \emph{path overfitting}. As the flow-matching loss continues to improve, generation quality initially improves but can later degrade. Sections~\ref{sec:path-overfitting} and \ref{sec:divergence-entropy} empirically associate this failure with a phenomenon that we call a \emph{low-entropy bottleneck}; Proposition~\ref{prop:entropy-rate} gives an exact entropy-rate interpretation of the divergence diagnostic.
    
    \item \textbf{Stochastic Gaussian regularization with entropy guarantees.}
    We design a simple regularization scheme based on perturbing the source input seen by the learned path network. Under independent endpoint coupling, Theorem~\ref{thm:source-randomization-smoothing} lower-bounds the entropy of the stochastic training-path marginals. Proposition~\ref{prop:kinetic-entropy-rate-bound} further lower bounds their instantaneous rate of entropy decrease.
    
    \item \textbf{ImageNet-scale validation and extension to model guidance.}
    On ImageNet-$256\times256$ with SiT backbones, our method consistently improves FID across model sizes while keeping sampling cost unchanged (Table~\ref{tab:model_comparison}). Our alignment template also composes with model-guidance \cite{tang2025diffusionwithoutcfg} training, and shows significant improvement over the model-guidance baseline (Table~\ref{tab:fid_50k_mg}).
\end{enumerate}

\section{Related Work}
\label{sec:related-work}

Flow matching trains continuous-time generative models by regressing a velocity field to tangents of a prescribed probability path \citep{lipman2022flowmatching}. Conditional flow matching, optimal-transport variants, and stochastic interpolants show that the path, coupling, and induced velocity target are central modeling choices \citep{tong2023improving,albergo2022building,albergo2025stochasticinterpolantsunifyingframework}. Our work stays within this framework, and focuses on the concrete question of how to train the path. Prior work improves flow matching by changing endpoint couplings or weights \citep{tong2023improving,pooladian2023multisample,calvoordonez2025weighted,lin2025beyondoptimaltransport}, straightening trajectories through reflow \citep{liu2022rectified}, or replacing Euclidean lines with geometric paths \citep{kapusniak2024metricflowmatching}. Closest to us are methods that learn paths or forward processes, including learned curved paths for straighter flows \citep{shankar2025learningstraightflows}, Neural Flow Diffusion Models \citep{bartosh2024nfdm}, ALI-CFM for observed multi-marginal snapshots \citep{kviman2025multimarginal}, and Curly-FM for non-gradient dynamics from population snapshots \citep{petrovic2025curlyfm}. Our method differs in purpose and use: after endpoints are sampled, we learn an endpoint-conditioned residual path as a training-time supervision mechanism. Rather than optimizing straightness, likelihood, or matching observed intermediate marginals, we optimize for path--flow alignment, and regularize the intermediate marginals via stochastic regularization. Classifier-free guidance modifies sampling by combining conditional and unconditional predictions \citep{cfg_seminal}, while model guidance folds a guided direction into training so that sampling requires a single conditional evaluation \citep{tang2025diffusionwithoutcfg}. Our main experiments use standard CFG evaluation (Section~\ref{sec:experiments}); Section~\ref{sec:mg} shows that the same path--flow alignment template also applies to model-guided velocity objectives.

\section{Flow-Guided Interpolation}
\label{sec:flow-guided-interpolation}

This section defines the learned training path. We first recall the population correctness statement for arbitrary endpoint-preserving paths, then introduce our residual parameterization and the alternating path--flow alignment algorithm.

\subsection{Problem setup and notation}
\label{sec:setup}

Let $p_0=\Ncal(0,I_d)$ be the latent source distribution. For each label $y\in\Ycal$, let $p_1(\cdot\mid y)$ denote the target latent distribution. For simplicity, this paper considers only independently coupled $(x_0,x_1)$, with joint probability $(x_0,x_1\mid y)\sim p_0(x_0) p_1(x_1\mid y)$. The path--flow alignment algorithm and the population flow-matching correctness results below extend to more general endpoint couplings under their stated regularity conditions. However, the Gaussian-smoothing and entropy guarantees in Section~\ref{sec:gaussian-controls-entropy} require an additional conditional-independence condition that is satisfied by the independent coupling above but need not hold for general optimal-transport couplings \citep{tong2023improving,lin2025beyondoptimaltransport}. For any given endpoints $(x_0,x_1)$, define the standard linear interpolation
\begin{equation}
    \gamma_{\lin}(t;x_0,x_1)=(1-t)x_0+tx_1,
    \qquad t\in[0,1].
    \label{eq:xlin}
\end{equation}
The linear path has constant velocity $x_1-x_0$. Standard conditional flow matching trains a velocity network $v_\phi(x,t,y)$ by matching $\dot\gamma_{\lin}(t;x_0,x_1) = x_1 - x_0$ at the intermediate state $\gamma_{\lin}(t;x_0,x_1)$.

More generally, one can replace $\gamma_\lin$ by any sufficiently regular random path \citep{lipman2022flowmatching,albergo2022building,albergo2025stochasticinterpolantsunifyingframework}. Under the regularity conditions of Proposition~\ref{prop:fm-projection}, an endpoint-preserving path induces an ideal ODE transport from source to target. We write a generic path as $\gamma(t;x_0,x_1,y,\xi)$, where $\xi$ captures auxiliary randomness.

\subsection{Flow matching and induced transport}
\label{sec:correctness-induced-flow}
\label{sec:fm-projection-section}
Given any path $\gamma$, let
$p_{\gamma,t}(\cdot\mid y)$ be the distribution of
$\gamma(t;x_0,x_1,y,\xi)$ conditional on $y$, where $\xi$ is sampled from its prescribed auxiliary distribution. The flow-matching loss is
\begin{equation}
    \mathcal L_{\FM}(v_\phi\mid\gamma)
    =\E_{y,x_1,x_0,\xi,t}\Big[
        \norm{v_\phi(\gamma(t;x_0,x_1,y,\xi),t,y)
        -\dot\gamma(t;x_0,x_1,y,\xi)}_2^2
    \Big].
    \label{eq:fm-loss-general}
\end{equation}
If $\gamma(t;x_0,x_1,y,\xi)=\gamma_{\lin}(t;x_0,x_1)$, then $\dot\gamma(t;x_0,x_1,y,\xi)=x_1-x_0$, and \eqref{eq:fm-loss-general} reduces to the standard conditional flow-matching loss $\E_{y,x_1,x_0,t}\Big[
        \norm{v_\phi(\gamma_{\lin}(t;x_0,x_1),t,y)-(x_1-x_0)}_2^2
    \Big]$.

We call $\gamma$ \emph{endpoint-preserving} if, almost surely,
\begin{equation}
    \gamma(0;x_0,x_1,y,\xi)=x_0,
    \qquad
    \gamma(1;x_0,x_1,y,\xi)=x_1.
    \label{eq:general-endpoint-preservation}
\end{equation}

\begin{proposition}[Population target and induced ODE transport]
\label{prop:fm-projection}
\label{prop:correct-population-flow}
Assume
\[
    \E\int_0^1
    \norm{\dot\gamma(t;x_0,x_1,y,\xi)}_2^2\,dt
    <\infty.
\]
For any fixed path $\gamma$, the population minimizer of \eqref{eq:fm-loss-general} is
\begin{equation}
    v_\gamma^\star(x,t,y)
    =\E\!\left[
        \dot\gamma(t;x_0,x_1,y,\xi)
        \;\middle|\;
        \gamma(t;x_0,x_1,y,\xi)=x,\,t,\,y
    \right].
    \label{eq:general-ideal-flow}
\end{equation}

Assume in addition that $t\mapsto\gamma(t;x_0,x_1,y,\xi)$ is almost surely absolutely continuous, that $\gamma$ is endpoint-preserving, and that $x_0\sim p_0$ and $x_1\sim p_1(\cdot\mid y)$. For each fixed $y$, assume that $v_\gamma^\star$ admits a jointly Borel version and, for almost every $t$, satisfies
\[
\begin{aligned}
    \norm{v_\gamma^\star(x,t,y)-v_\gamma^\star(x',t,y)}_2
    &\le L_y(t)\norm{x-x'}_2, \\
    \norm{v_\gamma^\star(x,t,y)}_2
    &\le A_y(t)\bigl(1+\norm{x}_2\bigr),
\end{aligned}
\]
for all $x,x'\in\R^d$, where $L_y,A_y\in L^1([0,1])$. Then the ODE
\begin{equation}
    \frac{d}{dt}z(t)=v_\gamma^\star(z(t),t,y),
    \qquad z(0)\sim p_0,
    \label{eq:ideal-forward-ode}
\end{equation}
has a unique Carath\'eodory flow whose marginals match those of $\gamma(t;x_0,x_1,y,\xi)$ for all $t\in[0,1]$. In particular,
\[
    z(1)\sim p_1(\cdot\mid y).
\]
\end{proposition}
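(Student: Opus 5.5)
The proof has two parts: identifying the population minimizer, then showing the ODE driven by that minimizer reproduces the path marginals.

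\textbf{Part 1: the minimizer.} Fix $t$ and $y$. The finiteness assumption $\E\int_0^1\norm{\dot\gamma}_2^2\,dt<\infty$ puts $\dot\gamma(t;x_0,x_1,y,\xi)$ in $L^2$ of the joint law of $(x_0,x_1,\xi,t)$. Hence \eqref{eq:fm-loss-general} is a least-squares regression of a square-integrable target on the random variables $(\gamma(t;\cdot),t,y)$.

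Write $X_t=\gamma(t;x_0,x_1,y,\xi)$. Conditioning on $(X_t,t,y)$ and adding and subtracting the conditional expectation splits the loss into two pieces:
\[
\E\norm{v_\phi(X_t,t,y)-v_\gamma^\star(X_t,t,y)}_2^2+\E\norm{v_\gamma^\star(X_t,t,y)-\dot\gamma}_2^2 .
\]
The cross term vanishes by the tower property. So the loss is minimized exactly when $v_\phi=v_\gamma^\star$ almost everywhere with respect to the law of $(X_t,t,y)$. This gives \eqref{eq:general-ideal-flow}.

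\textbf{Part 2: marginals solve the continuity equation.} Let $\mu_t$ be the law of $X_t$ given $y$. Take a test function $\varphi\in C_c^\infty(\R^d)$. Absolute continuity of $t\mapsto X_t$ lets us write
\[
\varphi(X_t)-\varphi(X_s)=\int_s^t\nabla\varphi(X_r)\cdot\dot\gamma(r)\,dr .
\]
Take expectations. Fubini is justified by the $L^2$ (hence $L^1$) bound on $\dot\gamma$ together with boundedness of $\nabla\varphi$. Then condition on $X_r$ inside the integral to replace $\dot\gamma$ by $v_\gamma^\star(X_r,r,y)$. This shows that $t\mapsto\int\varphi\,d\mu_t$ is absolutely continuous with derivative $\int\nabla\varphi\cdot v_\gamma^\star\,d\mu_t$. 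In other words, $(\mu_t)$ is a narrowly continuous distributional solution of
\[
\partial_t\mu_t+\nabla\!\cdot(v_\gamma^\star\mu_t)=0,
\]
with $\mu_0=p_0$ by endpoint preservation.

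\textbf{Part 3: flow existence and uniqueness.} The Carathéodory Lipschitz bound with $L_y\in L^1$ and the linear growth bound with $A_y\in L^1$ give the following by standard Carathéodory ODE theory:
\begin{itemize}
\item a unique global absolutely continuous solution from every initial point;
\item a flow map $\Phi_t$ that is Lipschitz in $x$, via Grönwall.
\end{itemize}
Hence $(\Phi_t)_\#p_0$ is also a solution of the same continuity equation with the same initial datum.

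\textbf{Part 4: uniqueness for the continuity equation (main obstacle).} To conclude $\mu_t=(\Phi_t)_\#p_0$ we need uniqueness of measure-valued solutions to the continuity equation, in the class containing $(\mu_t)$, for a field that is Lipschitz in $x$ and only $L^1$ in time. I would invoke the superposition principle (Ambrosio--Gigli--Savaré, Prop.~8.1.8). Its integrability hypothesis is $\int_0^1\int\norm{v_\gamma^\star}\,d\mu_t\,dt<\infty$. This follows from Jensen's inequality for conditional expectation:
\[
\int\norm{v_\gamma^\star}_2\,d\mu_t\le\E\norm{\dot\gamma(t)}_2,
\]
which is integrable in $t$. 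The principle represents $\mu_t$ as the time marginals of a measure on absolutely continuous curves solving the ODE. Since the ODE solution from each initial point is unique, those curves are exactly $t\mapsto\Phi_t(x)$ with $x\sim\mu_0=p_0$. Therefore $\mu_t=(\Phi_t)_\#p_0$ for all $t$.

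Taking $t=1$ and using endpoint preservation, $\mu_1=p_1(\cdot\mid y)$, so $z(1)\sim p_1(\cdot\mid y)$.

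The delicate points are all in Parts 2 and 4:
\begin{itemize}
\item the choice of a jointly Borel version of $v_\gamma^\star$, which is needed so the conditioning step holds for a.e. $r$ simultaneously;
\item the time-measurability of $\mu_t$;
\item checking the integrability hypothesis of the superposition principle.
\end{itemize}
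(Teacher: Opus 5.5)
Your proposal is correct and follows the same route as the paper. Both arguments use the bias--variance decomposition via the tower property for the minimizer, the weak continuity equation satisfied by the path marginals, the Carath\'eodory flow whose law solves the same equation from the same initial datum $p_0$, and uniqueness of probability solutions to identify the two. Where the paper only asserts that uniqueness from the Lipschitz and linear-growth hypotheses, you justify it via the Ambrosio--Gigli--Savar\'e representation result, check its integrability condition by conditional Jensen, and justify the Fubini step, which tightens the same argument rather than changing it.
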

We defer the proof to Appendix~\ref{ss:proof:prop:fm-projection}. Intuitively, \eqref{eq:general-ideal-flow} averages the tangents of all paths passing through the same state, time, and label. Therefore the ODE driven by $v_\gamma^\star$ moves probability mass in the same way as the path marginals.

\subsection{Endpoint-preserving neural paths}
\label{sec:network-interpolation}

Having verified the correctness of the flow-matching objective \eqref{eq:fm-loss-general} for general paths, we now discuss how to use a residual path network $R_\psi(t;x_0,x_1,y):[0,1]\times\R^d\times\R^d\times\Ycal\to\R^d.$ to parameterize a general learnable path:
\begin{equation}
    \gamma_\psi(t;x_0,x_1,y)
    =\gamma_{\lin}(t;x_0,x_1)+t(1-t)R_\psi(t;x_0,x_1,y).
    \label{eq:learned-interpolant}
\end{equation}
We describe the detailed implementation of $R_\psi$ in Appendix~\ref{app:path-subtractor-implementation}; all our results are agnostic to the parameterization of $R_\psi$. The envelope $t(1-t)$ allows the residual to curve the interior of the path while enforcing endpoint preservation \eqref{eq:general-endpoint-preservation}. This envelope has previously been used in \citep{shankar2025learningstraightflows,petrovic2025curlyfm,bartosh2024nfdm}. By observing that $t(1-t) = 0$ at $t=0$ and $t=1$, we immediately verify that 
\begin{equation}
    \gamma_\psi(0;x_0,x_1,y)=x_0,
    \qquad
    \gamma_\psi(1;x_0,x_1,y)=x_1.
    \label{lem:endpoint-preservation}
\end{equation}

\subsection{Flow-guided path alignment and alternating optimization}
\label{sec:teacher-guided-path-alignment}
\label{sec:algorithm-description}

\paragraph{Path and flow are trained on the same objective.} For a fixed path $\gamma_\psi$, the flow phase minimizes
\begin{equation}
    \mathcal L_{\FM}(\phi\mid\psi)
    =\E_{y,x_1,x_0,t}\Big[
        \norm{v_\phi(\gamma_\psi(t;x_0,x_1,y),t,y)
        -\dot\gamma_\psi(t;x_0,x_1,y)}_2^2
    \Big].
    \label{eq:fm-loss-neural}
\end{equation}
The path phase uses the same objective, but freezes $v_\phi$ and optimizes $\gamma_\psi$:
\begin{align*}
    \numberthis \label{eq:path-objective}
    \mathcal L_{\mathrm{path}}(\psi\mid\phi)
    =&\E_{y,x_1,x_0}\int_0^1
        \norm{v_\phi(\gamma_\psi(t;x_0,x_1,y),t,y)-\dot\gamma_\psi(t;x_0,x_1,y)}_2^2\,dt \\
        &\qquad + \beta \E_{y,x_1,x_0}\int_0^1
        \norm{\dot\gamma_\psi(t;x_0,x_1,y)}_2^2\,dt,
    \qquad\qquad \beta\ge0.
    \label{eq:ltrack}
\end{align*}
The first term of \eqref{eq:path-objective} is identical to \eqref{eq:fm-loss-neural}. Thus the path phase does not introduce a separate supervision signal; it uses the current flow model to align the path. The second term of \eqref{eq:path-objective} is the kinetic energy of the path, scaled by $\beta$. This term regularizes the path by driving it toward the straight line; it is also important later for controlling the entropy of the induced path marginals (see Proposition~\ref{prop:kinetic-entropy-rate-bound} of Section~\ref{sec:gaussian-controls-entropy}).

We formally state the procedure in Algorithm~\ref{alg:simple_alternating_minimization} alternates between a \textbf{path training phase} and a \textbf{flow training phase}. In the simple case of $\rho_t\equiv1$, $\hat{x}_0(t) = x_0$, and Algorithm~\ref{alg:simple_alternating_minimization} is ordinary alternating minimization: the path phase aligns $\gamma_\psi$ to the current flow, and the flow phase trains $v_\phi$ on the current path. The optional stochastic-regularization case ($\rho_t<1$) is discussed later in Section~\ref{sec:gaussian-perturbation-definition}.

Figure~\ref{fig:toy-learned-trajectories} illustrates the effect of path--flow alignment on the learned flow in a two-dimensional mixture example. Compared with standard flow matching, the learned flows follow more curved ODE trajectories; stochastic regularization also changes the intermediate marginals. The path network influences these trajectories by changing the states and tangent targets used to train the velocity field, while sampling uses only the learned velocity field. Appendix~\ref{app:learned-trajectories} gives the toy setup and further discussion.

\paragraph{Correctness of flow-matching objective. } Proposition~\ref{prop:fm-projection} and \eqref{lem:endpoint-preservation} together imply that every updated path remains an exact interpolation from \(p_0\) to \(p_1(\cdot\mid y)\), and its population flow-matching target induces the corresponding correct ODE transport. Therefore, after each path update, the subsequent flow network phase is still a valid flow-matching problem for the current path.

\begin{algorithm}[t]
\caption{Alternating path--flow alignment with optional stochastic regularization}
\label{alg:simple_alternating_minimization}
\begin{algorithmic}[1]
\Require Path parameters $\psi$, flow parameters $\phi$, cycles $C$, path steps $K_\psi$, flow steps $K_\phi$, kinetic weight $\beta$, optional perturbation schedule $t\mapsto\rho_t$
\For{$c=1,\dots,C$}
    \For{$k=1,\dots,K_\psi$}
        \State Sample $y$, $x_1\sim p_1(\cdot\mid y)$, $x_0\sim p_0$, $t\sim\Unif[0,1]$, $\epsilon\sim\Ncal(0,I_d)$
        \State $\hat x_0(t)=\rho_t x_0+\sqrt{1-\rho_t^2}\epsilon$
        \State $\hat\gamma_\psi(t;x_0,x_1,y,\epsilon)=(1-t)x_0+tx_1+t(1-t)R_\psi(t;\hat x_0(t),x_1,y).$
        \State $\mathcal J_\psi\gets
        \norm{\partial_t\hat\gamma_\psi(t;x_0,x_1,y,\epsilon)-v_\phi(\hat\gamma_\psi(t;x_0,x_1,y,\epsilon),t,y)}_2^2
        +\beta\norm{\partial_t\hat\gamma_\psi(t;x_0,x_1,y,\epsilon)}_2^2$
        \State Update $\psi$ using $-\nabla_\psi\mathcal J_\psi$, with $\phi$ frozen
    \EndFor
    \For{$k=1,\dots,K_\phi$}
        \State Sample $y$, $x_1\sim p_1(\cdot\mid y)$, $x_0\sim p_0$, $t\sim\Unif[0,1]$, $\epsilon\sim\Ncal(0,I_d)$
        \State $\hat x_0(t)=\rho_t x_0+\sqrt{1-\rho_t^2}\epsilon$
        \State $\hat\gamma_\psi(t;x_0,x_1,y,\epsilon)=(1-t)x_0+tx_1+t(1-t)R_\psi(t;\hat x_0(t),x_1,y).$
        \State $\mathcal J_\phi\gets
        \norm{\partial_t\hat\gamma_\psi(t;x_0,x_1,y,\epsilon)-v_\phi(\hat\gamma_\psi(t;x_0,x_1,y,\epsilon),t,y)}_2^2$
        \State Update $\phi$ using $-\nabla_\phi\mathcal J_\phi$, with $\psi$ frozen
    \EndFor
\EndFor
\end{algorithmic}
\end{algorithm}

\subsection{Path overfitting}
\label{sec:path-overfitting}

\begin{figure}[t]
\centering

\begin{minipage}[t]{0.48\textwidth}
    \centering
    \includegraphics[width=\linewidth]{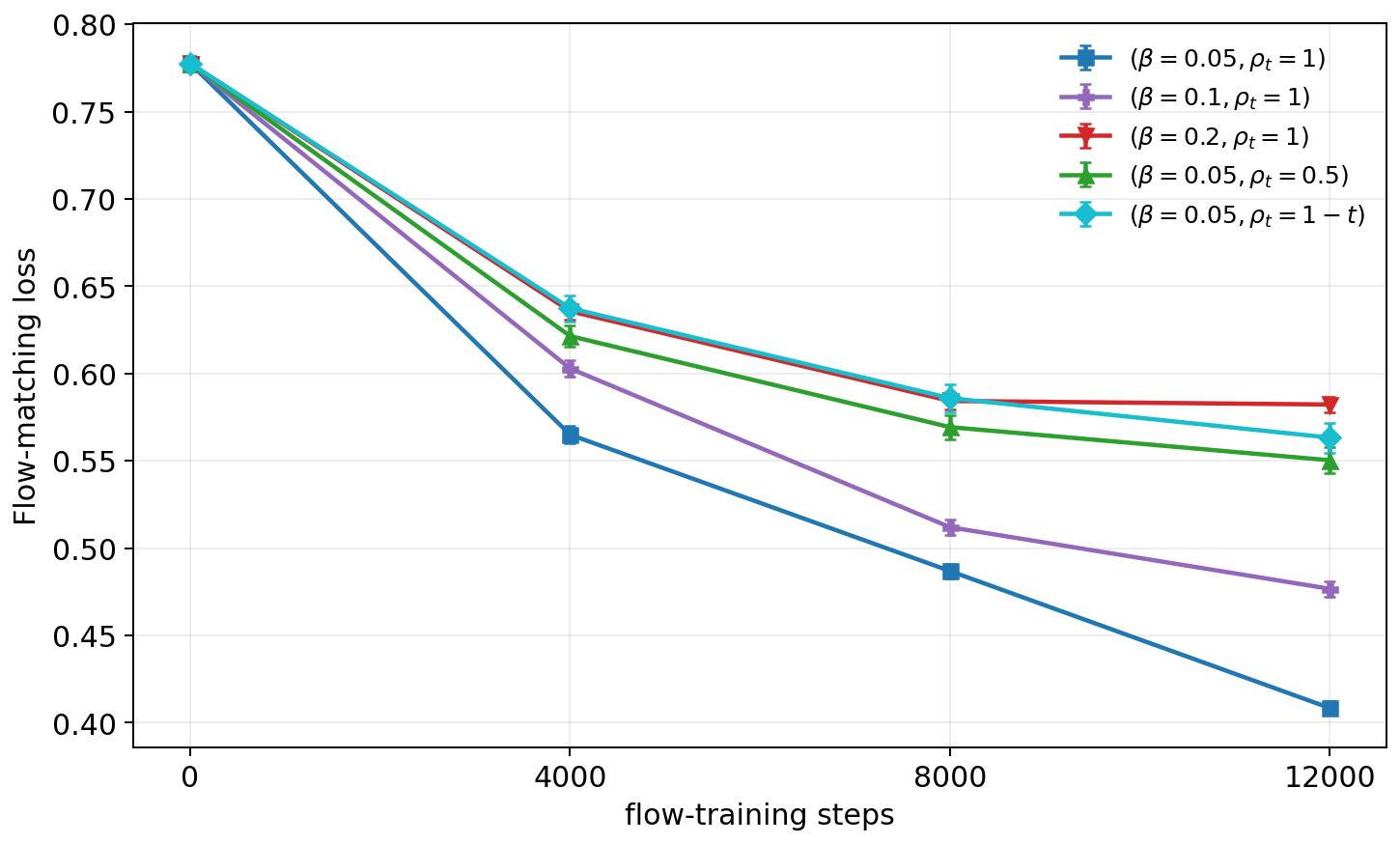}
    \caption{Flow-matching loss across alternating training cycles on the SiT-S model.}
    \label{fig:L_fm_overfitting}
\end{minipage}
\hfill
\begin{minipage}[t]{0.48\textwidth}
    \centering
    \includegraphics[width=\linewidth]{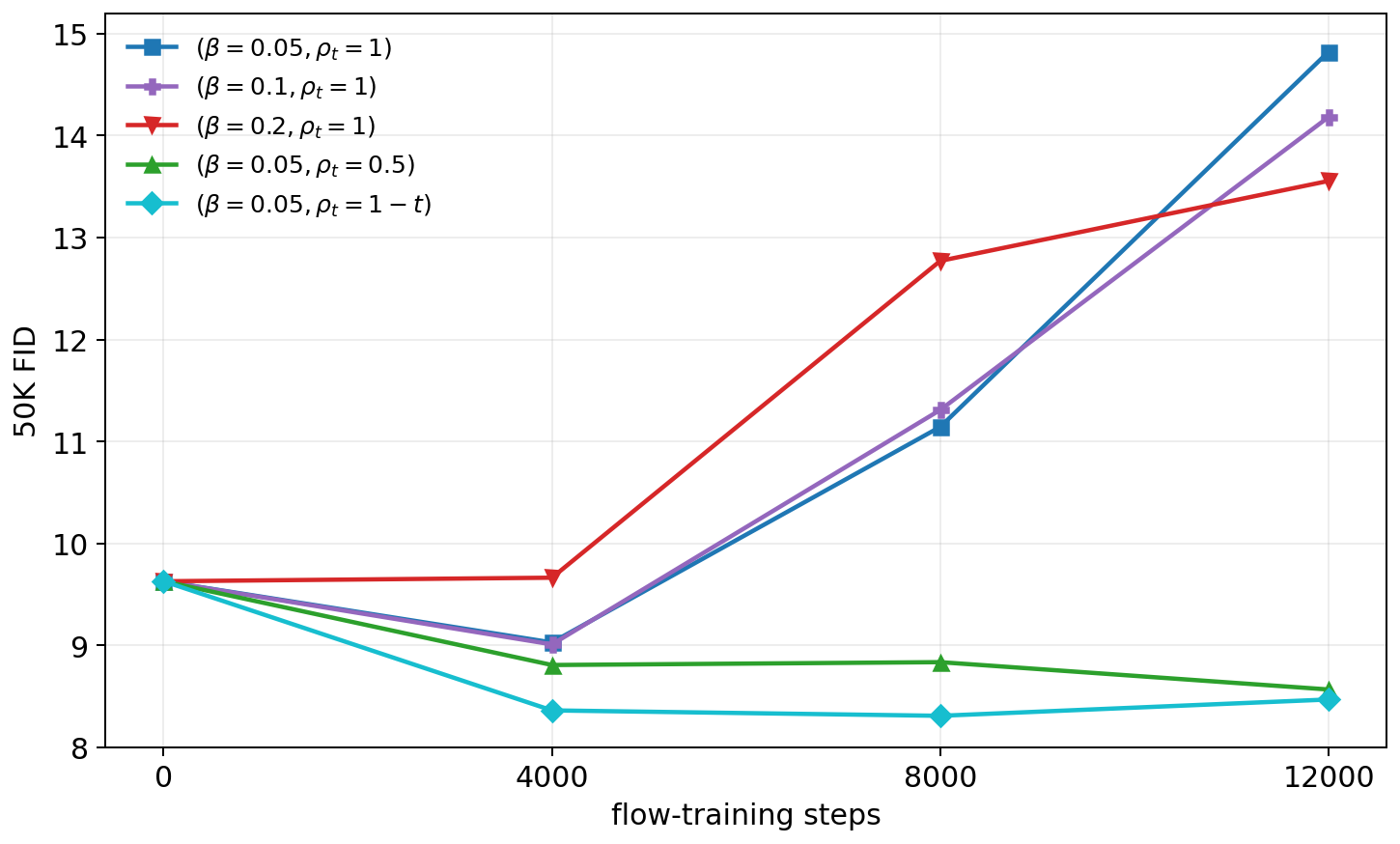}
    \caption{FID across alternating training cycles on the SiT-S model.}
    \label{fig:fid_overfitting}
\end{minipage}

\end{figure}

We first evaluate whether alternating optimization of \(\gamma_\psi\) and \(v_\phi\) improves the downstream flow. Figure~\ref{fig:L_fm_overfitting} shows the flow-matching loss after successive cycles of Algorithm~\ref{alg:simple_alternating_minimization}, and Figure~\ref{fig:fid_overfitting} shows the corresponding FID. We ablate the kinetic energy regularization strength \(\beta\). For reference, the figure also includes the stochastic source-regularized variant introduced later in Section~\ref{sec:overfitting-stochastic}. Appendix~\ref{app:fid-diagnostic-correlations} provides additional scatter-plot diagnostics for the same SiT-S sweep, relating FID to late divergence, average speed, and average acceleration of both the learned flow and the learned path.

The two plots reveal a clear path-overfitting effect. In the $\rho_t \equiv 1$ Algorithm~1 runs, the flow-matching loss continues to decrease as training proceeds. This means that the learned path and the learned flow become increasingly well aligned under the training objective. However, the corresponding FID does not continue to improve. For the unregularized runs, FID initially improves after the first cycle, but then rapidly deteriorates even as \(\mathcal L_{\FM}\) keeps decreasing. 

This demonstrates that a low flow-matching loss is not, by itself, a sufficient condition for a good learned path. Proposition~\ref{prop:fm-projection} guarantees that the population target for an endpoint-preserving path induces a correct transport, but the finite trained network can still be poor for generation in practice. In other words, the path objective can overfit: it can improve the local regression problem without preserving the distributional properties needed for robust sampling.

\section{Entropy Bottlenecks and Stochastic Regularization}
\label{sec:overfitting-stochastic}

Section~\ref{sec:path-overfitting} shows that endpoint correctness and low flow-matching loss do not by themselves guarantee good generation. The learned path can become well aligned with the current flow on its own training states while the resulting sampler develops unstable distributional behavior. In this section, we identify this behavior as an entropy bottleneck and introduce a stochastic regularizer motivated by this diagnosis.

The key diagnostic is average flow divergence. Along any probability flow, average divergence equals the rate of change of differential entropy. Thus a late positive-divergence spike means that the sampler is rapidly expanding entropy near the endpoint. If this expansion compensates for earlier over-contraction, the flow must have passed through an overly concentrated intermediate marginal. We call this a low-entropy bottleneck.

We next introduce stochastic source perturbation: by perturbing the $x_0$ input to the residual path network, we effectively convolve the intermediate path marginals by an independent Gaussian, which in turn lower bounds entropy. The same smoothing also bounds Fisher information; combined with path kinetic energy, this gives an instantaneous lower bound on entropy decrease (equivalently average flow divergence).

To keep the notation separate, Section~\ref{sec:divergence-entropy} diagnoses the probability flow generated by the learned sampler $v_\phi$, whose marginals are denoted $p_{\phi,t}$. Section~\ref{sec:gaussian-perturbation-definition} analyzes the stochastic-regularized training path $\hat\gamma_\psi$, whose marginals are denoted $\hat p_{\psi,t}$. In the population limit where the trained flow equals the path-induced target, $v_\phi=\hat v_\psi^\star$, the sampler marginals $p_{\phi,t}$ coincide with the regularized path marginals $\hat p_{\psi,t}$.

The distinction between these marginals is important for the scope of our results. Theorem~\ref{thm:source-randomization-smoothing} and Proposition~\ref{prop:kinetic-entropy-rate-bound} below control the stochastic training-path marginals $\hat p_{\psi,t}$, whereas Proposition~\ref{prop:entropy-rate} applies directly to the finite learned-sampler marginals $p_{\phi,t}$. The two are guaranteed to coincide when $v_\phi=\hat v_\psi^\star$, but may differ under finite optimization and model capacity. For a fixed path, population excess flow-matching risk controls an $L^2$ velocity discrepancy, while entropy evolution depends on velocity derivatives through divergence; velocity-level control alone therefore does not transfer the path-level entropy guarantee to the learned sampler. We accordingly interpret the divergence measurements below as empirical evidence about the learned sampler rather than as a consequence of the path-level guarantee.

\subsection{Divergence spikes reveal entropy bottlenecks}
\label{sec:divergence-entropy}

An important diagnostic for the path-overfitting phenomenon in Section~\ref{sec:path-overfitting} is the average divergence of the learned velocity field $v_\phi$, which has a direct connection to entropy: for any probability flow, the expected divergence of the velocity field equals the time derivative of differential entropy. Thus a large positive-divergence spike near the endpoint indicates that the learned flow is rapidly increasing entropy at late time. If this late expansion follows an earlier over-contraction, then the probability flow has passed through a low-entropy bottleneck.

\begin{figure}[t]
\centering

\begin{minipage}[t]{0.48\textwidth}
    \centering
    \includegraphics[width=\linewidth]{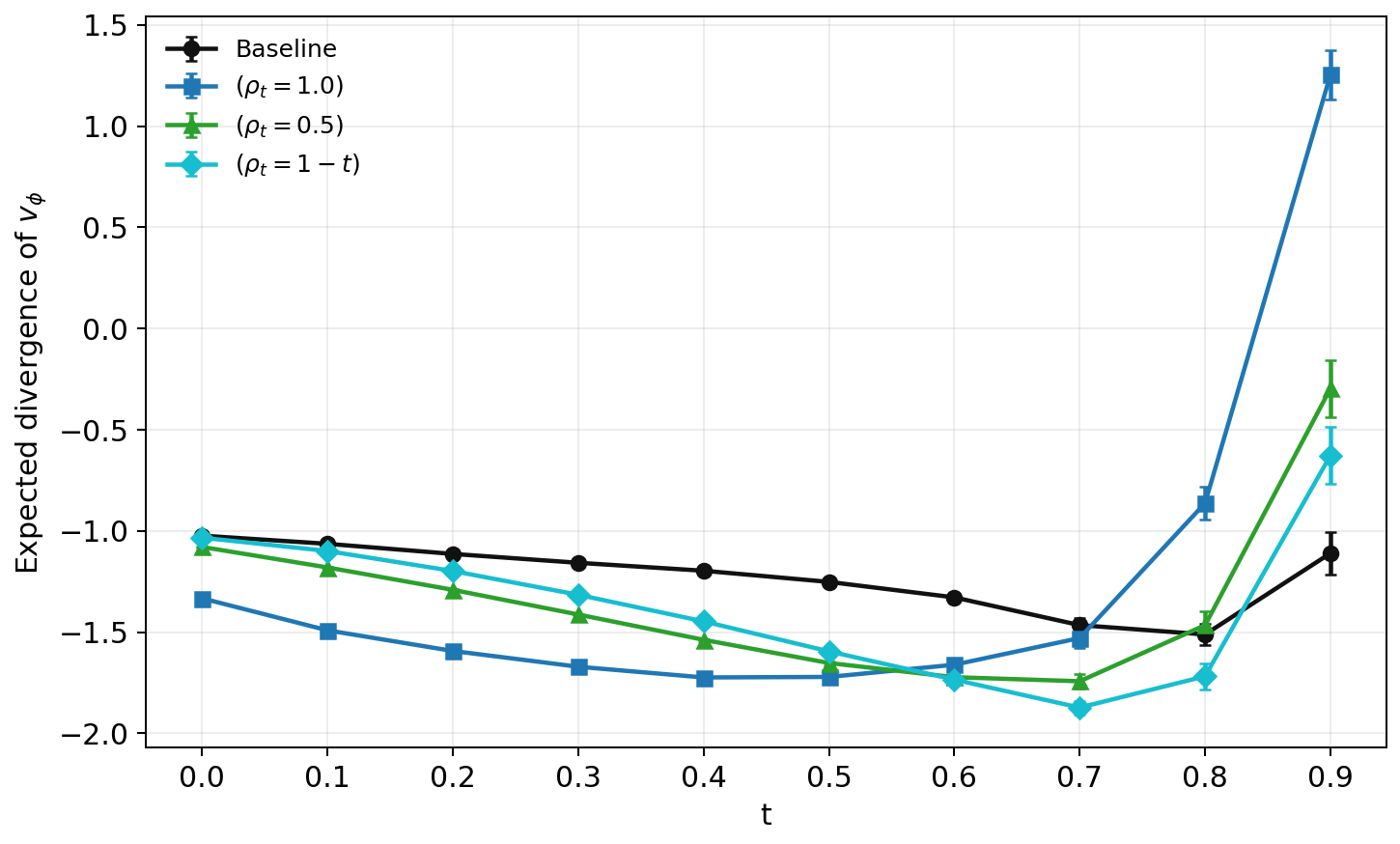}
    \vspace{-20pt}
    \caption{Per-dimensional binned divergence of the learned SiT-S flow network along its induced probability flow.}
    \label{fig:bucket_divergence}
\end{minipage}
\hfill
\begin{minipage}[t]{0.48\textwidth}
    \centering
    \includegraphics[width=\linewidth]{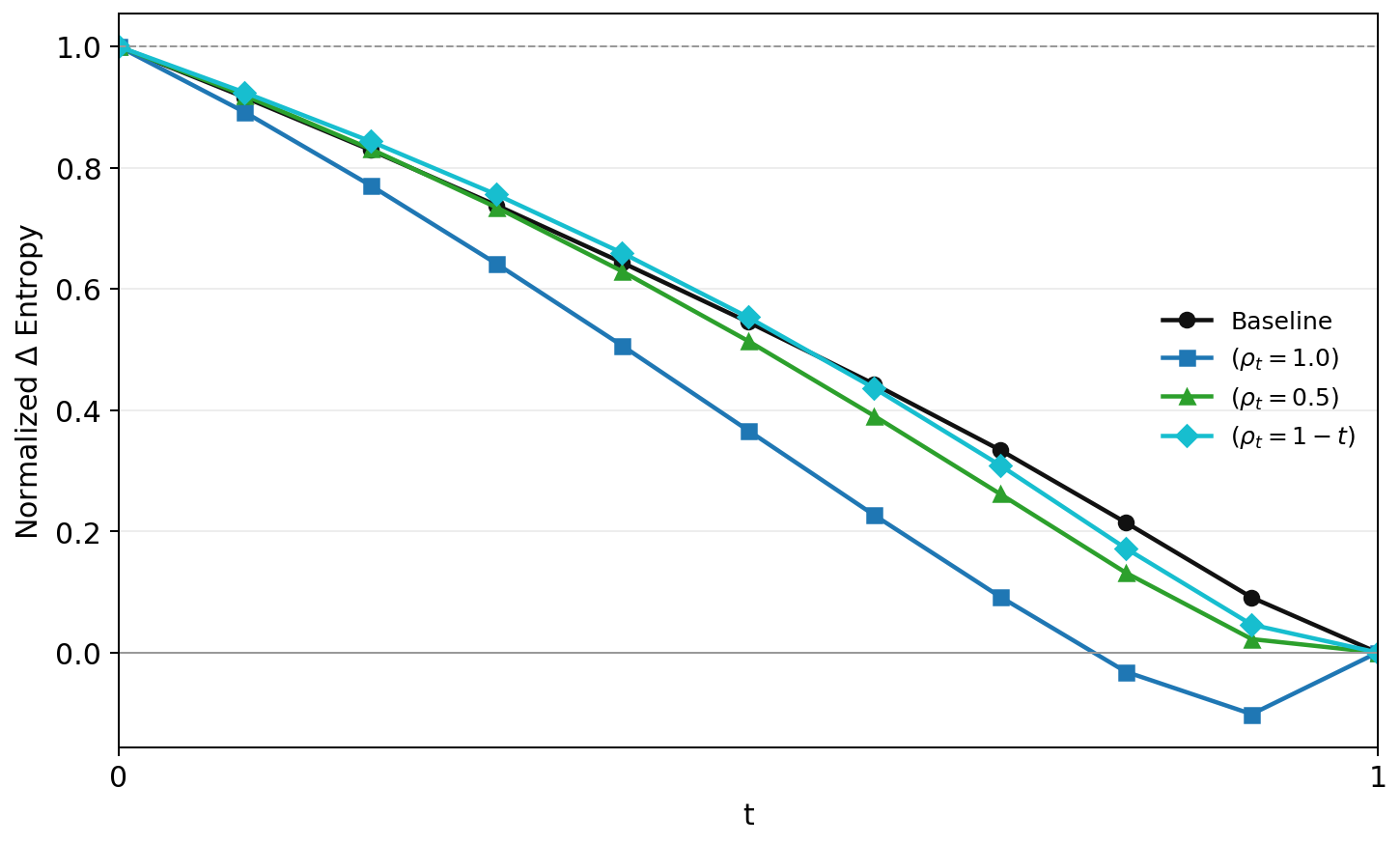}
    \vspace{-20pt}
    \caption{Cumulative normalized divergence. Values below zero indicate an entropy bottleneck relative to the generated endpoint distribution.}
    \label{fig:cumulative_divergence}
\end{minipage}
\end{figure}

For each trained model, let $z_\phi(t)$ denote the ODE trajectory generated by the learned flow,
\begin{equation}
    \frac{d}{dt}z_\phi(t)=v_\phi(z_\phi(t),t,y),
    \qquad z_\phi(0)\sim p_0,
    \label{eq:learned-flow-ode}
\end{equation}
and let $p_{\phi,t}(\cdot\mid y)$ be the law of $z_\phi(t)$ conditional on $y$. We divide the time interval $[0,1]$ into ten bins. For a bin $[a,b]$, we estimate the expected flow divergence 
\begin{equation}
    \bar D_\phi([a,b])
    =
    \frac{1}{d(b-a)}
    \int_a^b
    \E_y\E_{x\sim p_{\phi,t}(\cdot\mid y)}
    \left[
        \divx v_\phi(x,t,y)
    \right]dt,
    \label{eq:binned-divergence-diagnostic}
\end{equation}
using a Hutchinson trace estimator. The factor $1/d$ gives the per-dimensional divergence. Figure~\ref{fig:bucket_divergence} plots this expected flow divergence against the time, for flows $v_\phi$ trained using baseline flow-matching, as well as using Algorithm \ref{alg:simple_alternating_minimization} with different values of $\rho_t$. The overfit $\rho_t=1$ run produces a pronounced positive-divergence spike near $t=1$, in sharp contrast to the baseline and the stochastic source-regularized run introduced in Section~\ref{sec:gaussian-perturbation-definition}. 

The following proposition relates flow-divergence to entropy-change:

\begin{proposition}[Expected divergence equals entropy rate]
\label{prop:entropy-rate}
Let $p_t(\cdot\mid y)$ evolve under the continuity equation $\partial_t p_t(x\mid y)
    +
    \divx\!\left(p_t(x\mid y)u_t(x,y)\right)
    =0,$
with sufficient smoothness, decay at infinity, and finite differential entropy. Equivalently, when the ODE is well posed, $p_t(\cdot\mid y)$ is the law of the solution to
$
    \frac{d}{dt}z(t)=u_t(z(t),y).
$
Then
\begin{equation}
    \frac{d}{dt}h(p_t(\cdot\mid y))
    =
    \E_{x\sim p_t(\cdot\mid y)}
    \left[
        \divx u_t(x,y)
    \right].
    \label{eq:generic-entropy-rate}
\end{equation}
Consequently, for any time bin $0\le a<b\le1$,
\begin{equation}
    \frac{1}{b-a}
    \int_a^b
    \E_{x\sim p_t(\cdot\mid y)}
    \left[
        \divx u_t(x,y)
    \right]dt
    =
    \frac{h(p_b(\cdot\mid y))-h(p_a(\cdot\mid y))}{b-a}.
    \label{eq:generic-binned-entropy-divergence}
\end{equation}
\end{proposition}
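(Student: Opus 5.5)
The plan is to differentiate the differential entropy $h(p_t)=-\int p_t\log p_t\,dx$ under the integral sign, substitute the continuity equation, and integrate by parts once. Throughout, the label $y$ is fixed and suppressed from the notation. The assumed smoothness and decay at infinity are exactly what justify each of these operations.

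\textbf{Differentiation.} Differentiating gives
\[
\frac{d}{dt}h(p_t)=-\int \partial_t p_t\,(\log p_t+1)\,dx .
\]
The constant term vanishes because $\int\partial_t p_t\,dx=\frac{d}{dt}\int p_t\,dx=0$ by mass conservation. Substituting $\partial_t p_t=-\divx(p_t u_t)$, we get
\[
\frac{d}{dt}h(p_t)=\int \divx(p_t u_t)\,\log p_t\,dx .
\]

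\textbf{Integration by parts.} Integrating by parts, with the boundary term vanishing by the decay assumption (for example $p_t u_t\log p_t\to0$ fast enough at infinity), gives
\[
\frac{d}{dt}h(p_t)=-\int p_t u_t\cdot\grad\log p_t\,dx=-\int u_t\cdot\grad p_t\,dx .
\]
A second integration by parts, whose boundary term $p_tu_t$ again vanishes by decay, turns this into
\[
\int p_t\,\divx u_t\,dx=\E_{x\sim p_t}[\divx u_t(x)],
\]
which is \eqref{eq:generic-entropy-rate}.

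\textbf{ODE cross-check.} As an alternative, I would note the Lagrangian derivation in the ODE formulation. With flow map $\Phi_t$, the change of variables gives $\log p_t(\Phi_t(x_0))=\log p_0(x_0)-\log|\det D\Phi_t(x_0)|$. Liouville's formula $\frac{d}{dt}\log\det D\Phi_t=\divx u_t(\Phi_t)$ then yields $h(p_t)=h(p_0)+\E\log\det D\Phi_t$. Differentiating in $t$ recovers the same identity, and this confirms that the two formulations agree when the ODE is well posed.

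\textbf{Binned identity.} For \eqref{eq:generic-binned-entropy-divergence}, I would integrate \eqref{eq:generic-entropy-rate} over $[a,b]$ using the fundamental theorem of calculus and divide by $b-a$. This requires $t\mapsto h(p_t)$ to be absolutely continuous, with derivative given a.e.\ by the expected divergence, which I would take as part of the "sufficient smoothness" hypothesis.

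\textbf{Main obstacle.} The main difficulty is rigor in the interchange of derivative and integral and in discarding the boundary terms, since $\log p_t$ is unbounded where $p_t\to0$. I would handle this by stating explicit sufficient conditions: $p_t>0$, $p_t$ and $u_t$ are $C^1$, and $p_t|\log p_t|(1+|u_t|)$ together with $|u_t\cdot\grad p_t|$ are dominated by an integrable envelope locally uniformly in $t$. Under these conditions the dominated convergence theorem and a cutoff argument (integrating against $\chi(x/R)$ and letting $R\to\infty$) justify every step.
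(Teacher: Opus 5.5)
Your proposal is correct and follows essentially the same route as the paper. Both arguments differentiate $h(p_t)$, drop the constant term by mass conservation, substitute the continuity equation, integrate by parts (your explicit second integration by parts is the step the paper compresses into a single equality), and then integrate over $[a,b]$. Your Liouville cross-check and the explicit domination and cutoff conditions are sound additions, but the paper's argument does not need them.
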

We defer the proof to Appendix \ref{ss:proof:prop:entropy-rate}. We apply Proposition~\ref{prop:entropy-rate} directly to the learned flow by taking
$
    u_t(\cdot,y)=v_\phi(\cdot,t,y),
    p_t(\cdot\mid y)=p_{\phi,t}(\cdot\mid y).
$
This gives the exact identity
\begin{equation}
    \frac{d}{dt}h(p_{\phi,t}(\cdot\mid y))
    =
    \E_{x\sim p_{\phi,t}(\cdot\mid y)}
    \left[
        \divx v_\phi(x,t,y)
    \right].
    \label{eq:learned-flow-entropy-rate}
\end{equation}
Thus the divergence curves in Figure~\ref{fig:bucket_divergence} are not merely local geometric measurements of $v_\phi$; they estimate the entropy-rate profile of the probability flow generated by the learned sampler. If, in the population limit, $v_\phi$ equals the ideal path-induced velocity $v_\psi^\star$ from Proposition~\ref{prop:fm-projection}, then $p_{\phi,t}=p_{\psi,t}$ by the induced ODE transport statement, and the generated endpoint is $p_1(\cdot\mid y)$.

The identity also gives a precise meaning to a low-entropy bottleneck. For any $s<1$, applying \eqref{eq:learned-flow-entropy-rate} on $[s,1]$ gives
$
    h(p_{\phi,1}(\cdot\mid y))-h(p_{\phi,s}(\cdot\mid y))
    =
    \int_s^1
    \E_{x\sim p_{\phi,t}(\cdot\mid y)}
    \left[
        \divx v_\phi(x,t,y)
    \right]dt.
$
Therefore,
\begin{equation}
    \int_s^1
    \E_{x\sim p_{\phi,t}(\cdot\mid y)}
    \left[
        \divx v_\phi(x,t,y)
    \right]dt
    >0
    \quad\Longleftrightarrow\quad
    h(p_{\phi,s}(\cdot\mid y))<h(p_{\phi,1}(\cdot\mid y)).
    \label{eq:entropy-bottleneck-equivalence}
\end{equation}
We call this event a \emph{low-entropy bottleneck}: the learned probability flow has contracted below the entropy of its own endpoint distribution $p_{\phi,1}$ and must subsequently expand entropy before $t=1$. In the ideal endpoint-preserving case $p_{\phi,1}=p_1$, so the same condition says that the intermediate marginal has entropy below the target entropy.

Figure~\ref{fig:cumulative_divergence} visualizes this effect through cumulative divergence. Let $\bar D_\phi(t)$ denote the piecewise-constant, per-dimensional binned estimate from \eqref{eq:binned-divergence-diagnostic}. We plot
$
    \widehat G_\phi(t)
    =
    1-
    (\int_0^t \bar D_\phi(r)\,dr)/(\int_0^1 \bar D_\phi(r)\,dr).
$
For the exact entropy-rate curve, this quantity equals the normalized excess entropy above the endpoint,
\begin{equation}
    G_\phi(t;y)
    =
    \frac{h(p_{\phi,t}(\cdot\mid y))-h(p_{\phi,1}(\cdot\mid y))}
         {h(p_0)-h(p_{\phi,1}(\cdot\mid y))}
    =
    1-
    \frac{\int_0^t
    \E_{x\sim p_{\phi,r}(\cdot\mid y)}[\divx v_\phi(x,r,y)]\,dr}
    {\int_0^1
    \E_{x\sim p_{\phi,r}(\cdot\mid y)}[\divx v_\phi(x,r,y)]\,dr}.
    \label{eq:normalized-excess-entropy}
\end{equation}
The plotted statistic estimates the divergences with Hutchinson traces. A value near $1$ corresponds to source-side entropy, a value near $0$ corresponds to reaching the endpoint entropy, and a value below $0$ indicates entropy below the endpoint entropy. The overfit run dips below zero around $t\approx0.9$, matching the large positive-divergence spike in Figure~\ref{fig:bucket_divergence}. This supports the interpretation that the overfit flow passes through a low-entropy bottleneck and then requires a compensating expansion.

\subsection{Regularization via Gaussian perturbation}
\label{sec:gaussian-perturbation-definition}
\label{sec:gaussian-controls-entropy}

Section~\ref{sec:divergence-entropy} suggests that path overfitting is associated with entropy bottlenecks: the learned path can reduce its local regression objective while routing probability mass through overly concentrated intermediate marginals. We therefore regularize the residual path network by hiding part of the source noise from the residual input. The key design choice is that the perturbation is applied only to the input of $R_\psi$; the endpoint interpolation itself still uses the true endpoints $(x_0,x_1)$.

Let $\rho_t\in[0,1]$ be a deterministic perturbation schedule and let $\epsilon\sim\Ncal(0,I_d)$ be independent of $(x_0,x_1,y)$. Define
\begin{equation}
    \hat x_0(t)
    =\rho_t x_0+\sqrt{1-\rho_t^2}\,\epsilon .
    \label{eq:time-dependent-xhat0}
\end{equation}
For each fixed $t$, the perturbed source input is still marginally standard Gaussian,
$
    \hat x_0(t)\sim\Ncal(0,I_d),
$
but its correlation with the true source endpoint is reduced from $1$ to $\rho_t$. Thus the residual network receives a source-like input with less information about the particular endpoint noise $x_0$ that appears in the actual interpolation. The stochastic-regularized path is
\begin{equation}
    \hat\gamma_{\psi}(t;x_0,x_1,y,\epsilon)
    =(1-t)x_0+tx_1+t(1-t)R_\psi(t;\hat x_0(t),x_1,y).
    \label{eq:time-dependent-rho-interpolant}
\end{equation}
This is the stochastic path used in Algorithm~\ref{alg:simple_alternating_minimization}. Setting $\rho_t\equiv1$ gives $\hat x_0(t)=x_0$ and recovers the unregularized path $\gamma_\psi$. Setting $\rho_t<1$ hides part of $x_0$ from the residual network, while leaving the true linear endpoint term unchanged. When $\rho_t$ is time-dependent, the tangent $\partial_t\hat\gamma_\psi$ differentiates through both the envelope $t(1-t)$ and the schedule $\rho_t$ in \eqref{eq:time-dependent-xhat0}. The perturbation does not change the endpoints. For every schedule $\rho_t$, every $\epsilon$, and every residual network $R_\psi$,
\begin{equation}
    \hat\gamma_\psi(0;x_0,x_1,y,\epsilon)=x_0,
    \qquad
    \hat\gamma_\psi(1;x_0,x_1,y,\epsilon)=x_1.
    \label{eq:rho-endpoint-preservation}
\end{equation}
Only the residual term sees the corrupted source input, and this residual term is multiplied by $t(1-t)$, which vanishes at both endpoints. When path learning is disabled, $R_\psi\equiv 0$, so \eqref{eq:time-dependent-rho-interpolant} reduces to the standard linear path independently of $\rho_t$; stochastic source perturbation therefore has no standalone effect without a learned residual path.

Let $\hat p_{\psi,t}(\cdot\mid y)$ denote the distribution of
$\hat\gamma_\psi(t;x_0,x_1,y,\epsilon)$ conditional on $y$, where $(x_0,x_1)$ is sampled from the chosen endpoint coupling and $\epsilon\sim\Ncal(0,I_d)$. The corresponding population flow-matching target is
\begin{equation}
    \hat v_{\psi}^\star(x,t,y)
    =\E\left[
        \partial_t\hat\gamma_{\psi}(t;x_0,x_1,y,\epsilon)
        \;\middle|\;
        \hat\gamma_{\psi}(t;x_0,x_1,y,\epsilon)=x,\,t,\,y
    \right].
    \label{eq:rho-ideal-flow}
\end{equation}
Thus Gaussian perturbation changes the training path distribution, but it does not make the flow-matching target biased or approximate: the population target remains the exact marginal velocity of the perturbed path.

\begin{corollary}[Correctness of stochastic source perturbation]
\label{cor:stochastic-source-correctness}
Assume that the stochastic path \eqref{eq:time-dependent-rho-interpolant} satisfies the assumptions of Proposition~\ref{prop:fm-projection}. Then the flow-matching loss trained on pairs $
    \left(
    \hat\gamma_\psi(t;x_0,x_1,y,\epsilon),
    \partial_t\hat\gamma_\psi(t;x_0,x_1,y,\epsilon)
    \right)
$
has population minimizer \eqref{eq:rho-ideal-flow}. Moreover, the probability flow generated by $\hat v_\psi^\star$ transports $p_0$ to $p_1(\cdot\mid y)$.
\end{corollary}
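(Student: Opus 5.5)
The plan is to obtain Corollary~\ref{cor:stochastic-source-correctness} directly from Proposition~\ref{prop:fm-projection}, by treating the stochastic path as a generic path with auxiliary randomness. Set $\xi:=\epsilon$ and
\[
\gamma(t;x_0,x_1,y,\xi):=\hat\gamma_\psi(t;x_0,x_1,y,\epsilon).
\]
Proposition~\ref{prop:fm-projection} was stated for an arbitrary path $\gamma(t;x_0,x_1,y,\xi)$ whose auxiliary randomness $\xi$ has a prescribed law. Here $\epsilon\sim\Ncal(0,I_d)$ is drawn independently of $(x_0,x_1,y)$, so this fits the setup exactly. Under this identification, the loss trained on the pairs $(\hat\gamma_\psi,\partial_t\hat\gamma_\psi)$ is literally \eqref{eq:fm-loss-general}. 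The tangent $\partial_t\hat\gamma_\psi$ is the time derivative of $t\mapsto\hat\gamma_\psi(t;x_0,x_1,y,\epsilon)$ with $\epsilon$ held fixed. This includes differentiation through $\rho_t$ inside $\hat x_0(t)$, which matches the meaning of $\dot\gamma$ in the proposition.

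The first step is the minimizer. The square-integrability assumption is part of the hypotheses, since we assume the stochastic path satisfies the assumptions of Proposition~\ref{prop:fm-projection}. Applying the first part of that proposition, the population minimizer is the conditional expectation \eqref{eq:general-ideal-flow}. With $\xi=\epsilon$ this is exactly \eqref{eq:rho-ideal-flow}. If one wants the underlying reason, it is the $L^2$-projection argument: the target is in $L^2$, and conditional expectation given $(\hat\gamma_\psi,t,y)$ minimizes mean squared error over measurable functions of $(\hat\gamma_\psi,t,y)$.

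The second step is transport, and the only hypothesis needing real verification is endpoint preservation \eqref{eq:general-endpoint-preservation}. This is already established in \eqref{eq:rho-endpoint-preservation}: the factor $t(1-t)$ vanishes at $t=0,1$ for every $\epsilon$, every schedule $\rho_t$, and every $R_\psi$. The remaining conditions are assumed. These are almost-sure absolute continuity in $t$, the laws $x_0\sim p_0$ and $x_1\sim p_1(\cdot\mid y)$ (the independent coupling), and the Lipschitz and linear-growth bounds on $\hat v_\psi^\star$. The second part of Proposition~\ref{prop:fm-projection} then gives a unique Carath\'eodory flow for $\dot z=\hat v_\psi^\star(z,t,y)$ with $z(0)\sim p_0$. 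Its marginals equal $\hat p_{\psi,t}(\cdot\mid y)$, so $z(1)\sim\hat p_{\psi,1}(\cdot\mid y)=p_1(\cdot\mid y)$, which is the transport claim.

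The main subtlety, though it is minor, is confirming that the auxiliary randomness does not break the conditioning structure. The marginal $\hat p_{\psi,t}$ and the target $\hat v_\psi^\star$ are defined after integrating over $\epsilon$ as well as $(x_0,x_1)$, and they coincide with $p_{\gamma,t}$ and $v^\star_\gamma$ of the proposition. Neither endpoint identity uses any relation between $\epsilon$ and $x_0$: at $t=0$ the path equals $x_0$ exactly, and at $t=1$ it equals $x_1$ exactly, whatever the value of $\hat x_0(t)$. Once this is noted, no further argument is needed beyond invoking Proposition~\ref{prop:fm-projection}.
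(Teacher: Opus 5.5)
Your proposal is correct and follows essentially the same route as the paper: treat $\epsilon$ as the auxiliary randomness $\xi$, get endpoint preservation from \eqref{eq:rho-endpoint-preservation}, and then apply both parts of Proposition~\ref{prop:fm-projection}. Your extra remarks, that the tangent differentiates through $\rho_t$ and that $\hat p_{\psi,t}$ and $\hat v_\psi^\star$ integrate over $\epsilon$, fill in details the paper leaves implicit but do not change the argument.
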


\begin{proof}
The additional Gaussian variable $\epsilon$ is simply auxiliary randomness in the sense of Section~\ref{sec:correctness-induced-flow}. Endpoint preservation follows from \eqref{eq:rho-endpoint-preservation}. The Corollary follows immediately from Proposition~\ref{prop:fm-projection}.
\end{proof}

We now explain why the stochastic source perturbation in Section~\ref{sec:gaussian-perturbation-definition} directly regularizes the intermediate marginals: First, at any fixed time $t$ away from the endpoint, the part of $x_0$ hidden from the residual network can be rewritten as independent additive Gaussian noise in the state. Thus $\hat p_{\psi,t}(\cdot\mid y)$ is a Gaussian-smoothed distribution and has an explicit entropy floor. Second, the same smoothing gives a Fisher-information bound. Combining this Fisher-information bound with the pointwise kinetic energy of the path yields an instantaneous lower bound on the entropy rate, or equivalently on the average divergence of the ideal marginal velocity.

The correctness results above do not require the independent endpoint coupling. For the Gaussian-smoothing result below, however, we use the independent coupling from Section~\ref{sec:setup}: conditional on $y$, $x_0\sim\Ncal(0,I_d)$ and $x_1\sim p_1(\cdot\mid y)$ are independent, and $\epsilon\sim\Ncal(0,I_d)$ is independent of both. More generally, a sufficient condition for the argument below is that $\eta_t$ in \eqref{eq:eta-t-definition} is conditionally standard Gaussian and independent of $(\hat x_0(t),x_1)$ given $y$. A general optimal-transport coupling need not satisfy this condition.

\begin{theorem}[Source randomization induces Gaussian smoothing]
\label{thm:source-randomization-smoothing}
Fix a time $t\in[0,1)$ such that $\rho_t<1$, and define
\begin{equation}
    \sigma_t=(1-t)\sqrt{1-\rho_t^2}.
    \label{eq:sigma-t-definition}
\end{equation}
Let $\hat p_{\psi,t}(\cdot\mid y)$ be the law of the stochastic path
\eqref{eq:time-dependent-rho-interpolant} conditional on $y$. Then there exists a distribution $q_{\psi,t}(\cdot\mid y)$ such that
\begin{equation}
    \hat p_{\psi,t}(\cdot\mid y)
    =q_{\psi,t}(\cdot\mid y)*\Ncal(0,\sigma_t^2 I_d).
    \label{eq:rho-convolution}
\end{equation}
More explicitly, if
\begin{equation}
    \eta_t
    =\frac{x_0-\rho_t\hat x_0(t)}{\sqrt{1-\rho_t^2}},
    \label{eq:eta-t-definition}
\end{equation}
then, conditional on $y$, $\eta_t\sim\Ncal(0,I_d)$ and $\eta_t$ is independent of $(\hat x_0(t),x_1)$, and
\begin{equation}
\begin{aligned}
    \hat\gamma_\psi(t;x_0,x_1,y,\epsilon)
    &=(1-t)\rho_t\hat x_0(t)+tx_1+t(1-t)R_\psi(t;\hat x_0(t),x_1,y) +\sigma_t\eta_t.
\end{aligned}
\label{eq:rho-gamma-convolution-form}
\end{equation}
The distribution $q_{\psi,t}(\cdot\mid y)$ in \eqref{eq:rho-convolution} is the law, conditional on $y$, of the non-Gaussian term
\begin{equation}
    (1-t)\rho_t\hat x_0(t)+tx_1+t(1-t)R_\psi(t;\hat x_0(t),x_1,y).
    \label{eq:qrho-pushforward-variable}
\end{equation}
Consequently, whenever the differential entropy is well-defined,
\begin{equation}
    h(\hat p_{\psi,t}(\cdot\mid y))
    \ge
    \frac d2\log\!\left(2\pi e\,\sigma_t^2\right)
    =
    \frac d2\log\!\left(2\pi e(1-t)^2(1-\rho_t^2)\right).
    \label{eq:entropy-floor}
\end{equation}
Moreover, the Fisher information of the smoothed marginal satisfies
\begin{equation}
    I(\hat p_{\psi,t}(\cdot\mid y))
    :=
    \E_{x\sim\hat p_{\psi,t}(\cdot\mid y)}
    \left[\norm{\grad\log \hat p_{\psi,t}(x\mid y)}_2^2\right]
    \le
    \frac{d}{\sigma_t^2}
    =
    \frac{d}{(1-t)^2(1-\rho_t^2)}.
    \label{eq:fisher-gaussian-smoothing-bound}
\end{equation}
\end{theorem}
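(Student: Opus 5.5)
The plan is to reduce everything to an orthogonal decomposition of the jointly Gaussian pair $(x_0,\epsilon)$, and then invoke two standard facts about Gaussian convolution.

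\emph{Step 1: orthogonal decomposition.} Conditional on $y$, the pair $(x_0,\epsilon)$ is standard Gaussian on $\R^{2d}$ and independent of $x_1$ (this uses the independent coupling). Set $\hat x_0(t)=\rho_t x_0+\sqrt{1-\rho_t^2}\,\epsilon$ and define $\eta_t$ as in \eqref{eq:eta-t-definition}. Expanding gives
\[
\eta_t=\sqrt{1-\rho_t^2}\,x_0-\rho_t\epsilon .
\]
The map $(x_0,\epsilon)\mapsto(\hat x_0(t),\eta_t)$ is a rotation applied blockwise. Hence $(\hat x_0(t),\eta_t)$ is again standard Gaussian on $\R^{2d}$. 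In particular $\eta_t\sim\Ncal(0,I_d)$, $\eta_t$ is independent of $\hat x_0(t)$, and both are independent of $x_1$ given $y$.

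Inverting the rotation gives
\[
x_0=\rho_t\hat x_0(t)+\sqrt{1-\rho_t^2}\,\eta_t .
\]
Substituting this into \eqref{eq:time-dependent-rho-interpolant} yields exactly \eqref{eq:rho-gamma-convolution-form}, with $\sigma_t=(1-t)\sqrt{1-\rho_t^2}$. The key point is that $R_\psi$ depends only on $(t,\hat x_0(t),x_1,y)$. Therefore the term \eqref{eq:qrho-pushforward-variable} is a measurable function of $(\hat x_0(t),x_1)$ and is independent of $\sigma_t\eta_t$. The law of a sum of independent variables is the convolution of their laws, which gives \eqref{eq:rho-convolution}, with $q_{\psi,t}$ the law of \eqref{eq:qrho-pushforward-variable}. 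We have $\sigma_t>0$ because $t<1$ and $\rho_t<1$.

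\emph{Step 2: entropy floor.} Write $X=Q+\sigma_t\eta$ with $Q\sim q_{\psi,t}$ independent of $\eta$. Conditioning reduces entropy, so
\[
h(X)\ge h(X\mid Q)=h(\sigma_t\eta)=\tfrac d2\log(2\pi e\sigma_t^2).
\]
This holds for an arbitrary $q$, including singular ones: the conditional density of $X$ given $Q=q$ is the Gaussian centered at $q$, so $h(X\mid Q)$ is well defined. Where the paper requires $h(X)$ to be well-defined, that is exactly the stated hypothesis. Alternatively, one can use concavity of entropy in the density, viewing $\hat p$ as a $q$-mixture of Gaussians.

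\emph{Step 3: Fisher information bound.} This is the step needing the most care, since $q$ may be singular and not smooth. Write $\varphi_\sigma$ for the $\Ncal(0,\sigma_t^2I_d)$ density. Then $\hat p=q*\varphi_\sigma$ is smooth and strictly positive. Differentiating under the integral gives the Tweedie-type identity
\[
\grad\log\hat p(x)=\E\!\left[-\eta/\sigma_t \,\middle|\, X=x\right].
\]
This holds because $\grad\varphi_\sigma(x-q)=-\frac{x-q}{\sigma_t^2}\varphi_\sigma(x-q)$ and $x-q=\sigma_t\eta$; dominated convergence is justified by Gaussian tails, uniformly in $q$.

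Conditional Jensen then gives
\[
\norm{\grad\log\hat p(X)}^2\le\E\!\left[\norm{\eta}^2/\sigma_t^2 \,\middle|\, X\right].
\]
Taking expectations yields $I(\hat p)\le \E\norm{\eta}^2/\sigma_t^2=d/\sigma_t^2$, which is \eqref{eq:fisher-gaussian-smoothing-bound}. Equivalently, this is the convolution inequality $I(q*\varphi)\le I(\varphi)$.

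The main obstacle is the regularity justification in Step 3: the interchange of derivative and integral, and the fact that the bound holds with no assumption on $q$. I expect Gaussian domination to handle both cleanly. Everything else is the rotation identity from Step 1.
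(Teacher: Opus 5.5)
Your proposal is correct and follows the paper's proof essentially step for step. It uses the same change of variables $x_0=\rho_t\hat x_0(t)+\sqrt{1-\rho_t^2}\,\eta_t$, where your orthogonal-map view replaces the paper's covariance computation (strictly, each $2\times 2$ block is a reflection rather than a rotation, but orthogonality is all you use), followed by the same ``conditioning reduces entropy'' argument for the floor and the same Tweedie-plus-conditional-Jensen argument for the Fisher bound. Your extra remarks on dominated convergence and on singular $q$ fill in regularity details that the paper leaves implicit.
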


We defer the proof to Appendix \ref{sec:proof:thm:source-randomization-smoothing}. The entropy floor in \eqref{eq:entropy-floor} is a state-level obstruction to arbitrarily deep entropy collapse. The bound becomes vacuous as $t\to1$, which is unavoidable: endpoint preservation requires the added smoothing to vanish at the target endpoint. %

This smoothing result also gives the bin-level intuition behind the divergence diagnostics above. Applying Proposition~\ref{prop:entropy-rate} to the ideal velocity $\hat v_\psi^\star$ gives, for any $0\le a<b<1$,
\[
    \int_a^b
    \E_{x\sim\hat p_{\psi,t}(\cdot\mid y)}
    [\divx \hat v_\psi^\star(x,t,y)]\,dt
    =
    h(\hat p_{\psi,b}(\cdot\mid y))-h(\hat p_{\psi,a}(\cdot\mid y)).
\]
Thus binned average divergence is a finite-difference entropy rate. Since \eqref{eq:entropy-floor} lower-bounds the entropy at the end of any bin with $\sigma_b>0$, Gaussian perturbation prevents arbitrarily deep entropy collapse on such a bin unless the entropy at the beginning of the bin is correspondingly large.

We next give an instantaneous version of this control:

\begin{proposition}[Entropy-rate lower bound under kinetic-energy control]
\label{prop:kinetic-entropy-rate-bound}
Assume the regularity conditions needed for Proposition~\ref{prop:entropy-rate} and the coupling and regularity conditions of Theorem~\ref{thm:source-randomization-smoothing}. Define the pointwise stochastic-path kinetic energy
\begin{equation}
    L_{\euc}^{\rho,\psi}(t,y)
    :=
    \E\left[
        \norm{\partial_t\hat\gamma_\psi(t;x_0,x_1,y,\epsilon)}_2^2
        \;\middle|\; y
    \right],
    \label{eq:pointwise-leuc-rho}
\end{equation}
where the superscript $\rho$ denotes the schedule $t\mapsto\rho_t$. Then, for every $t<1$ with $\rho_t<1$,
\begin{equation}
    \frac{d}{dt}h(\hat p_{\psi,t}(\cdot\mid y))
    \ge
    -\frac{\sqrt{d\,L_{\euc}^{\rho,\psi}(t,y)}}{(1-t)\sqrt{1-\rho_t^2}}
    =
    -\frac{d}{1-t}
    \sqrt{\frac{L_{\euc}^{\rho,\psi}(t,y)}{d(1-\rho_t^2)}}.
    \label{eq:entropy-rate-kinetic-bound}
\end{equation}
\end{proposition}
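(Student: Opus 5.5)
We apply Proposition~\ref{prop:entropy-rate} to the stochastic path marginals $\hat p_{\psi,t}(\cdot\mid y)$, with velocity equal to the population target $\hat v_\psi^\star$ from \eqref{eq:rho-ideal-flow}. By Corollary~\ref{cor:stochastic-source-correctness}, the continuity equation $\partial_t\hat p_{\psi,t}+\divx(\hat p_{\psi,t}\hat v_\psi^\star)=0$ holds in the weak sense. Fix $y$ and write $p=\hat p_{\psi,t}(\cdot\mid y)$, $v=\hat v_\psi^\star(\cdot,t,y)$. Then
\[
\frac{d}{dt}h(p)=\E_{x\sim p}[\divx v(x)]=-\E_{x\sim p}\bigl[\inner{\grad\log p(x)}{v(x)}\bigr].
\]
The second equality is an integration by parts: $\int p\,\divx v=-\int \inner{\grad p}{v}$. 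The boundary terms vanish by the assumed decay at infinity.

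Next we apply Cauchy--Schwarz in $L^2(p)$:
\[
\frac{d}{dt}h(p)\ge-\Bigl(\E_p\norm{\grad\log p}_2^2\Bigr)^{1/2}\Bigl(\E_p\norm{v}_2^2\Bigr)^{1/2}=-\sqrt{I(p)}\,\bigl(\E_p\norm{v}_2^2\bigr)^{1/2}.
\]
Two factors remain to be bounded.

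For the Fisher information, Theorem~\ref{thm:source-randomization-smoothing}, equation \eqref{eq:fisher-gaussian-smoothing-bound}, gives $I(p)\le d/\sigma_t^2=d/\bigl((1-t)^2(1-\rho_t^2)\bigr)$. This requires $t<1$ and $\rho_t<1$, exactly as in the hypothesis.

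For the velocity term, $v$ is a conditional expectation of $\partial_t\hat\gamma_\psi$ given the state. Conditional Jensen's inequality (the $L^2$-contraction of conditional expectation) then gives
\[
\E_p\norm{v}_2^2\le \E\bigl[\norm{\partial_t\hat\gamma_\psi}_2^2\mid y\bigr]=L_{\euc}^{\rho,\psi}(t,y).
\]
Combining the two bounds yields $\frac{d}{dt}h\ge-\sqrt{d\,L_{\euc}^{\rho,\psi}(t,y)}\big/\bigl((1-t)\sqrt{1-\rho_t^2}\bigr)$. The second form in \eqref{eq:entropy-rate-kinetic-bound} is the same expression rewritten algebraically: $\sqrt{dL}=d\sqrt{L/d}$.

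The main obstacle is justifying the integration by parts and the finiteness of each factor. One issue is that $\grad\log p$ must exist, so $p$ needs to be smooth and positive. This follows from the Gaussian convolution representation \eqref{eq:rho-convolution}, since convolving with $\Ncal(0,\sigma_t^2 I_d)$ gives a $C^\infty$, strictly positive density. The second issue is that the boundary term in $\int \divx(pv)$ must vanish. For this I would rely on the regularity assumptions inherited from Proposition~\ref{prop:entropy-rate}. Alternatively, one can first prove the identity against compactly supported cutoffs and then pass to the limit, using that $\E_p\norm{v}_2^2<\infty$ and $I(p)<\infty$ make $\inner{\grad\log p}{v}$ integrable by Cauchy--Schwarz.

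A subtle point is that the Fisher bound from the theorem holds pointwise in $t$. Since only instantaneous values at a fixed time enter the argument, no uniformity in $t$ is needed.
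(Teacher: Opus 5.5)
Your proposal is correct and follows the paper's proof essentially step for step: the entropy-rate identity of Proposition~\ref{prop:entropy-rate} applied to $\hat v_\psi^\star$, integration by parts to $-\E\,\inner{\hat v_\psi^\star}{\grad\log\hat p_{\psi,t}}$, Cauchy--Schwarz, conditional Jensen to bound $\E\norm{\hat v_\psi^\star}_2^2$ by $L_{\euc}^{\rho,\psi}(t,y)$, and the Fisher bound \eqref{eq:fisher-gaussian-smoothing-bound}. Your remarks on smoothness via the Gaussian convolution and the cutoff argument for the boundary term add regularity details that the paper simply assumes. One small correction on sourcing: the weak continuity equation you need comes from the first part of the proof of Proposition~\ref{prop:fm-projection}, which does not require the Lipschitz hypotheses that Corollary~\ref{cor:stochastic-source-correctness} inherits.
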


We defer the proof to Appendix \ref{sec:proof:prop:kinetic-entropy-rate-bound}. Proposition~\ref{prop:kinetic-entropy-rate-bound} shows that stochastic perturbation and kinetic regularization play complementary roles. The smoothing scale
$
    \sigma_t=(1-t)\sqrt{1-\rho_t^2}
$
controls the Fisher information of the marginal: larger smoothing makes the score smaller and prevents sharper entropy collapse. The kinetic term controls the size of the marginal velocity acting on this smoothed distribution. Thus $L_{\euc}$ is not only a geometric straightness penalty; in the presence of Gaussian perturbation, it also bounds how fast the path-induced marginal can decrease entropy.

\paragraph{The scaling is comparable to the standard linear path.} If $R_\psi\equiv0$ and the target is a point mass, then the linear path has marginal $\Ncal(0,(1-t)^2I_d)$ and entropy rate exactly $-d/(1-t)$. The bound \eqref{eq:entropy-rate-kinetic-bound} recovers this order when $\rho_t$ is bounded away from $1$. Conversely, when $\rho_t$ is close to $1$, there is little smoothing and the bound appropriately becomes weak.

\section{Generalizing Beyond Flow Matching}
\label{sec:mg}

Path--flow alignment is not tied to the standard flow-matching velocity; it only needs a velocity-like target that is regressed to the path tangent. Model guidance \citep{tang2025diffusionwithoutcfg} provides such a target by folding classifier-free guidance into training. For guidance scale $\omega$, define
\[
    \tilde v_\phi^\omega(x,t,y)
    =v_\phi(x,t,y)
    -\omega\,\mathrm{sg}\!\left(v_\phi(x,t,y)-v_\phi(x,t,\emptyset)\right),
\]
where $\mathrm{sg}$ denotes stop-gradient, and train with
\begin{equation}
    \mathcal L_{\mathrm{MG}}(\phi\mid\gamma)
    =\E_{y,x_1,x_0,\xi,t}
    \norm{\tilde v_\phi^\omega(\gamma(t;x_0,x_1,y,\xi),t,y)
    -\dot\gamma(t;x_0,x_1,y,\xi)}_2^2 .
    \label{eq:model_guidance_loss}
\end{equation}
We use the same alternating updates as Algorithm~\ref{alg:simple_alternating_minimization}, replacing $v_\phi$ by $\tilde v_\phi^\omega$ in both phases. Stochastic source perturbation and endpoint preservation are unchanged. Table~\ref{tab:fid_50k_mg} shows that this extension improves 50k-FID over standard MG fine-tuning on both SiT-B-MG and SiT-XL-MG.

\section{Experiments}
\label{sec:experiments}
Unless otherwise stated, we use $\beta=0.05$ and $\rho_t = c(1-t)$ with $c=1$ for SiT-S/B/L and $c=0.7$ for SiT-XL. We provide full training setups in Appendix~\ref{sec:experimental_details_appx}. 
\paragraph{ImageNet results.}
We evaluate regularized path--flow alignment on class-conditional ImageNet-$256\times256$ using SiT backbones \citep{ma2024sit,imagenet_dataset}. Each run starts from the same pretrained SiT checkpoint. The baseline continues standard flow-matching training for 12k steps; our method uses three alternating cycles, each with 3k path updates followed by 4k flow updates. Table~\ref{tab:model_comparison} reports the best 50k-FID achieved within the 12k flow-update budget. Our method improves FID at every scale, including SiT-XL, where standard fine-tuning does not improve the starting checkpoint. The largest gains are on SiT-L and SiT-XL, with relative FID improvements of $19.1\%$ and $12.7\%$, respectively. The full checkpoint-wise FID/IS table is in Appendix Table~\ref{tab:model_comparison_full}.

\paragraph{Model guidance.}
We also test the effective-velocity extension from Section~\ref{sec:mg}. Starting from FID-converged SiT-B-MG and SiT-XL-MG checkpoints, we compare standard MG fine-tuning baseline against path--flow alignment with the MG objective. Table~\ref{tab:fid_50k_mg} shows that our method improves the best FID from $7.940$ to $5.585$ on SiT-B-MG and from $1.494$ to $1.322$ on SiT-XL-MG. Checkpoint-wise FID results are in Appendix Table~\ref{tab:fid_50k_mg_full}.

The FID improvements persist against compute-matched continued finetuning (Appendix~\ref{app:compute-matched-continuation}) and across three disjoint generation-seed sets (Appendix~\ref{app:generation-seed-robustness}). The ImageNet results improve recall and Inception Score (Appendix~\ref{app:cfg-and-pr} and Table~\ref{tab:model_comparison_full}). We also find gains on FlowDCN-B (Appendix~\ref{app:flowdcn-transfer}) and compare with \citep{shankar2025learningstraightflows} (Appendix~\ref{app:cifar-controlled-comparison}). We demonstrate that path--flow alignment increases curvature of sampled trajectories on a toy setup and include qualitative comparisons showing improvements in image quality (Appendix~\ref{app:qualitative-samples}).

\begin{table}[H]
\centering
\small
\caption{ImageNet-$256\times256$ summary. We report the best 50k-FID ($\downarrow$) over the same 12k continuation budget; full checkpoint-wise FID/IS results are in Appendix Tables~\ref{tab:model_comparison_full}--\ref{tab:fid_50k_mg_full}. Params denote the inference-time SiT flow model only; the path network is used only during training.}
\label{tab:model_comparison}
\label{tab:fid_50k_mg}
\begin{tabular}{llccccc}
\toprule
Objective & Model & Params & Init FID & Baseline best & Ours best & FID gain \\
\midrule
FM & SiT-S  & 33M  & 9.109 & 9.022 & \textbf{8.359} & $7.3\%$ \\
FM & SiT-B  & 130M & 5.074 & 5.057 & \textbf{4.415} & $12.7\%$ \\
FM & SiT-L  & 458M & 3.424 & 3.390 & \textbf{2.742} & $19.1\%$ \\
FM & SiT-XL & 675M & 2.091 & 2.091 & \textbf{1.825} & $12.7\%$ \\
MG & SiT-B-MG  & 130M & 8.090 & 7.940 & \textbf{5.585} & $29.7\%$ \\
MG & SiT-XL-MG & 675M & 1.522 & 1.494 & \textbf{1.322} & $11.5\%$ \\
\bottomrule
\end{tabular}
\end{table}

\section{Conclusion}
We introduced path--flow alignment, which jointly learns an endpoint-preserving training path and flow model, and identified path overfitting as a failure mode of naive joint optimization. Stochastic source perturbation provides Gaussian-smoothing and entropy guarantees for the learned training path, while our experiments show that it also suppresses the associated late-time divergence bottleneck in finite learned samplers. Across SiT and FlowDCN models, the resulting post-training procedure improves generation quality under compute-matched comparisons without changing the inference architecture or sampler. Our theory does not establish a causal link between entropy bottlenecks and generation quality, nor does it guarantee transfer of the path-level entropy control to an imperfectly fitted flow; understanding this connection and extending joint path--flow training beyond pretrained image generators remain important directions.

\section*{Funding and Competing Interests}
The authors received no third-party funding or support for this work, and declare no competing financial interests.

\bibliographystyle{plain}
\bibliography{updated_references}

@misc{lipman2022flowmatching,
      title={Flow Matching for Generative Modeling}, 
      author={Yaron Lipman and Ricky T. Q. Chen and Heli Ben-Hamu and Maximilian Nickel and Matt Le},
      year={2023},
      eprint={2210.02747},
      archivePrefix={arXiv},
      primaryClass={cs.LG},
      url={https://arxiv.org/abs/2210.02747}, 
}

@misc{ma2024sit,
      title={SiT: Exploring Flow and Diffusion-based Generative Models with Scalable Interpolant Transformers}, 
      author={Nanye Ma and Mark Goldstein and Michael S. Albergo and Nicholas M. Boffi and Eric Vanden-Eijnden and Saining Xie},
      year={2024},
      eprint={2401.08740},
      archivePrefix={arXiv},
      primaryClass={cs.CV},
      url={https://arxiv.org/abs/2401.08740}, 
}

@misc{bartosh2024nfdm,
      title={Neural Flow Diffusion Models: Learnable Forward Process for Improved Diffusion Modelling}, 
      author={Grigory Bartosh and Dmitry Vetrov and Christian A. Naesseth},
      year={2025},
      eprint={2404.12940},
      archivePrefix={arXiv},
      primaryClass={stat.ML},
      url={https://arxiv.org/abs/2404.12940}, 
}

@misc{kapusniak2024metricflowmatching,
      title={Metric Flow Matching for Smooth Interpolations on the Data Manifold}, 
      author={Kacper Kapuśniak and Peter Potaptchik and Teodora Reu and Leo Zhang and Alexander Tong and Michael Bronstein and Avishek Joey Bose and Francesco Di Giovanni},
      year={2024},
      eprint={2405.14780},
      archivePrefix={arXiv},
      primaryClass={cs.LG},
      url={https://arxiv.org/abs/2405.14780}, 
}

@misc{lin2025beyondoptimaltransport,
      title={Beyond Optimal Transport: Model-Aligned Coupling for Flow Matching}, 
      author={Yexiong Lin and Yu Yao and Tongliang Liu},
      year={2025},
      eprint={2505.23346},
      archivePrefix={arXiv},
      primaryClass={cs.CV},
      url={https://arxiv.org/abs/2505.23346}, 
}

@misc{shankar2025learningstraightflows,
      title={Learning Straight Flows by Learning Curved Interpolants}, 
      author={Shiv Shankar and Tomas Geffner},
      year={2025},
      eprint={2503.20719},
      archivePrefix={arXiv},
      primaryClass={cs.LG},
      url={https://arxiv.org/abs/2503.20719}, 
}

@misc{kviman2025multimarginal,
      title={Multi-Marginal Flow Matching with Adversarially Learnt Interpolants}, 
      author={Oskar Kviman and Kirill Tamogashev and Nicola Branchini and Víctor Elvira and Jens Lagergren and Nikolay Malkin},
      year={2026},
      eprint={2510.01159},
      archivePrefix={arXiv},
      primaryClass={cs.LG},
      url={https://arxiv.org/abs/2510.01159}, 
}

@misc{petrovic2025curlyfm,
      title={Curly Flow Matching for Learning Non-gradient Field Dynamics}, 
      author={Katarina Petrović and Lazar Atanackovic and Viggo Moro and Kacper Kapuśniak and İsmail İlkan Ceylan and Michael Bronstein and Avishek Joey Bose and Alexander Tong},
      year={2025},
      eprint={2510.26645},
      archivePrefix={arXiv},
      primaryClass={cs.LG},
      url={https://arxiv.org/abs/2510.26645}, 
}

@misc{tong2023improving,
      title={Improving and generalizing flow-based generative models with minibatch optimal transport}, 
      author={Alexander Tong and Kilian Fatras and Nikolay Malkin and Guillaume Huguet and Yanlei Zhang and Jarrid Rector-Brooks and Guy Wolf and Yoshua Bengio},
      year={2024},
      eprint={2302.00482},
      archivePrefix={arXiv},
      primaryClass={cs.LG},
      url={https://arxiv.org/abs/2302.00482}, 
}

@misc{albergo2022building,
      title={Building Normalizing Flows with Stochastic Interpolants}, 
      author={Michael S. Albergo and Eric Vanden-Eijnden},
      year={2023},
      eprint={2209.15571},
      archivePrefix={arXiv},
      primaryClass={cs.LG},
      url={https://arxiv.org/abs/2209.15571}, 
}

@misc{albergo2025stochasticinterpolantsunifyingframework,
      title={Stochastic Interpolants: A Unifying Framework for Flows and Diffusions}, 
      author={Michael S. Albergo and Nicholas M. Boffi and Eric Vanden-Eijnden},
      year={2025},
      eprint={2303.08797},
      archivePrefix={arXiv},
      primaryClass={cs.LG},
      url={https://arxiv.org/abs/2303.08797}, 
}

@misc{liu2022rectified,
      title={Flow Straight and Fast: Learning to Generate and Transfer Data with Rectified Flow}, 
      author={Xingchao Liu and Chengyue Gong and Qiang Liu},
      year={2022},
      eprint={2209.03003},
      archivePrefix={arXiv},
      primaryClass={cs.LG},
      url={https://arxiv.org/abs/2209.03003}, 
}

@misc{pooladian2023multisample,
      title={Multisample Flow Matching: Straightening Flows with Minibatch Couplings}, 
      author={Aram-Alexandre Pooladian and Heli Ben-Hamu and Carles Domingo-Enrich and Brandon Amos and Yaron Lipman and Ricky T. Q. Chen},
      year={2023},
      eprint={2304.14772},
      archivePrefix={arXiv},
      primaryClass={cs.LG},
      url={https://arxiv.org/abs/2304.14772}, 
}

@misc{calvoordonez2025weighted,
      title={Weighted Conditional Flow Matching}, 
      author={Sergio Calvo-Ordonez and Matthieu Meunier and Alvaro Cartea and Christoph Reisinger and Yarin Gal and Jose Miguel Hernandez-Lobato},
      year={2026},
      eprint={2507.22270},
      archivePrefix={arXiv},
      primaryClass={cs.LG},
      url={https://arxiv.org/abs/2507.22270}, 
}

@misc{tang2025diffusionwithoutcfg,
      title={Diffusion Models without Classifier-free Guidance}, 
      author={Zhicong Tang and Jianmin Bao and Dong Chen and Baining Guo},
      year={2025},
      eprint={2502.12154},
      archivePrefix={arXiv},
      primaryClass={cs.CV},
      url={https://arxiv.org/abs/2502.12154}, 
}

@INPROCEEDINGS{imagenet_dataset,
  author={Deng, Jia and Dong, Wei and Socher, Richard and Li, Li-Jia and Kai Li and Li Fei-Fei},
  booktitle={2009 IEEE Conference on Computer Vision and Pattern Recognition}, 
  title={ImageNet: A large-scale hierarchical image database}, 
  year={2009},
  volume={},
  number={},
  pages={248-255},
  doi={10.1109/CVPR.2009.5206848}}

@misc{cfg_seminal,
      title={Classifier-Free Diffusion Guidance}, 
      author={Jonathan Ho and Tim Salimans},
      year={2022},
      eprint={2207.12598},
      archivePrefix={arXiv},
      primaryClass={cs.LG},
      url={https://arxiv.org/abs/2207.12598}, 
}

@misc{fid_seminal,
      title={GANs Trained by a Two Time-Scale Update Rule Converge to a Local Nash Equilibrium}, 
      author={Martin Heusel and Hubert Ramsauer and Thomas Unterthiner and Bernhard Nessler and Sepp Hochreiter},
      year={2018},
      eprint={1706.08500},
      archivePrefix={arXiv},
      primaryClass={cs.LG},
      url={https://arxiv.org/abs/1706.08500}, 
}

@misc{IS_seminal,
      title={Improved Techniques for Training GANs}, 
      author={Tim Salimans and Ian Goodfellow and Wojciech Zaremba and Vicki Cheung and Alec Radford and Xi Chen},
      year={2016},
      eprint={1606.03498},
      archivePrefix={arXiv},
      primaryClass={cs.LG},
      url={https://arxiv.org/abs/1606.03498}, 
}

@misc{FlowDCN,
      title={FlowDCN: Exploring DCN-like Architectures for Fast Image Generation with Arbitrary Resolution}, 
      author={Shuai Wang and Zexian Li and Tianhui Song and Xubin Li and Tiezheng Ge and Bo Zheng and Limin Wang},
      year={2024},
      eprint={2410.22655},
      archivePrefix={arXiv},
      primaryClass={cs.CV},
      url={https://arxiv.org/abs/2410.22655}, 
}

\newpage
\appendix

\section{Notation guide}
\label{app:notation-guide}

Table~\ref{tab:notation-guide} summarizes the main path, marginal, and velocity notation used throughout the paper. The generic notation $\gamma$ and $v_\gamma^\star$ is used in the population flow-matching statements; the symbols $\gamma_\psi$ and $\hat\gamma_\psi$ denote the two learned-path special cases used by the algorithm and the stochastic regularizer.

\begin{table}[h]
\centering
\caption{Notation convention for paths, induced marginals, and velocity fields.}

\begin{tabular}{lp{0.70\linewidth}}
\toprule
Symbol & Meaning \\
\midrule
$\gamma_{\lin}$ & Standard linear path, $\gamma_{\lin}(t;x_0,x_1)=(1-t)x_0+tx_1$. \\
$\gamma$ & Generic sufficiently regular path, possibly with auxiliary randomness $\xi$. \\
$\gamma_\psi$ & Unregularized learned residual path. \\
$\hat\gamma_\psi$ & Stochastic-regularized learned residual path. \\
$p_{\gamma,t}$ & Marginal distribution of a generic path $\gamma(t;x_0,x_1,y,\xi)$ at time $t$. \\
$p_{\psi,t}$ & Marginal distribution of the unregularized learned path $\gamma_\psi(t;x_0,x_1,y)$. \\
$\hat p_{\psi,t}$ & Marginal distribution of the stochastic-regularized path $\hat\gamma_\psi(t;x_0,x_1,y,\epsilon)$. \\
$p_{\phi,t}$ & ODE marginal induced by the learned sampler $v_\phi$; this is not a path marginal unless $v_\phi$ equals the corresponding ideal path velocity. \\
$v_\gamma^\star$ & Population flow-matching target for a generic path $\gamma$; it is the conditional mean of the path tangent given position, time, and label. \\
$v_\psi^\star$ & Population flow-matching target for the unregularized learned path $\gamma_\psi$. \\
$\hat v_\psi^\star$ & Population flow-matching target for the stochastic-regularized path $\hat\gamma_\psi$. \\
$v_\phi$ & Trained flow network used for sampling at inference time. \\
$u_t$ & Generic velocity field used in entropy-rate identities. \\
$\rho_t$ & Time-dependent source-perturbation schedule. \\
$\sigma_t$ & Gaussian smoothing scale induced by source perturbation, $\sigma_t=(1-t)\sqrt{1-\rho_t^2}$. \\
$L_{\euc}^{\rho,\psi}(t,y)$ & Pointwise kinetic energy of the stochastic-regularized path, defined in \eqref{eq:pointwise-leuc-rho}. \\
\bottomrule
\end{tabular}
\label{tab:notation-guide}
\end{table}

\section{Discussion and Limitations}
\label{sec:discussion-limitations}

Our experiments focus on the image generation task using FID as the key metric. While this is a challenging problem, flow-based generative models have been applied in a broad range of settings. It would be useful to investigate the efficacy of our method in these applications as well.

Path--flow alignment requires additional training compute and memory for the path network, and alternating path and flow optimization adds engineering complexity. The compute-matched results in Appendix~\ref{app:compute-matched-continuation} show that the gains persist under comparable continuation compute, but do not remove this practical overhead. Our experiments study continued training from pretrained or converged checkpoints rather than joint training from scratch. Although Appendix~\ref{app:flowdcn-transfer} broadens the architecture and dataset scope, the evaluation remains restricted to class-conditional image generation; text-to-image models and non-image domains remain untested. The sensitivity study over $\rho_t$ and $\beta$ is limited, the path network and subtractor design is not exhaustively ablated, and the multi-seed results in Appendix~\ref{app:generation-seed-robustness} measure generation-seed variability rather than variability across independently trained models.

Theorem~\ref{thm:source-randomization-smoothing} and Proposition~\ref{prop:kinetic-entropy-rate-bound} control the stochastic training-path marginals and their path-induced population velocity, not directly the marginals produced by the finite learned sampler. These marginals coincide when the learned flow equals the path-induced population velocity, but may differ under finite optimization and model capacity; velocity-level flow-matching control alone does not provide the derivative-level control needed to transfer the entropy guarantee. Accordingly, the observed association between late-time divergence, entropy bottlenecks, and FID is empirical rather than causal. Finally, the Gaussian-convolution, entropy, and Fisher-information guarantees require the stated conditional-independence condition and do not automatically extend to general endpoint couplings such as optimal transport.

\section{Path network and subtractor implementation}
\label{app:path-subtractor-implementation}

Below we describe the implementation of the residual path network $R_{\psi}$ used in the SiT experiments. The final sampler is still an ordinary SiT velocity model $v_\phi$. The additional path machinery consists of two SiT-shaped networks used only to define the training path: a trainable path network $P_\psi$ and a frozen subtractor network $S$. The role of the subtractor is to make the learned path start exactly as the linear path while still allowing $P_\psi$ to learn endpoint-conditioned residual curvature.

\paragraph{Teacher-residualized path parameterization.}
Let
\[
    \gamma_{\lin}(t;x_0,x_1)=(1-t)x_0+tx_1.
\]
The implemented path is
\begin{equation}
\begin{aligned}
    \gamma_\psi^{\mathrm{impl}}(t;x_0,x_1,y,\epsilon_0)
    &=\gamma_{\lin}(t;x_0,x_1) \\
    &\quad + t(1-t)
    \Big[
        P_\psi(\hat\gamma_{\lin}(t),\hat\Delta(t),t,y)
        -S(\hat\gamma_{\lin}(t),t,y)
    \Big].
\end{aligned}
\label{eq:app-implemented-path}
\end{equation}
thus \eqref{eq:app-implemented-path} matches the residual form in \eqref{eq:learned-interpolant} with
\[
    R_\psi(t;x_0,x_1,y)
    =P_\psi(\gamma_{\lin}(t;x_0,x_1),x_1-x_0,t,y)
    -S(\gamma_{\lin}(t;x_0,x_1),t,y).
\]

\paragraph{Perturbed endpoint inputs.}
The outer anchor path in \eqref{eq:app-implemented-path} always uses the true endpoints $(x_0,x_1)$. Perturbations are applied only to the inputs of $P_\psi$ and $S$. Given schedules $\rho_t\in[0,1]$, define
\begin{equation}
    \hat x_0(t)=\rho_t x_0+\sqrt{1-\rho_t^2}\,\epsilon_0.
    \label{eq:app-perturbed-endpoints}
\end{equation}
The source perturbation $\rho_t$ is the stochastic regularizer analyzed in Section~\ref{sec:gaussian-perturbation-definition}. The actual inputs to the path network and subtractor are
\begin{equation}
    \hat\gamma_{\lin}(t)=(1-t)\hat x_0(t)+tx_1,
    \qquad
    \hat\Delta(t)=x_1-\hat x_0(t).
    \label{eq:app-hatted-network-inputs}
\end{equation}
Thus stochastic regularization hides part of the endpoint information from the residual branch, while the endpoint identities of the learned path remain exact because the residual term is multiplied by the boundary envelope.

\paragraph{Subtractor network.}
The subtractor $S(x,t,y)$ is an unmodified SiT velocity network with the standard single patch-embedding stem, timestep embedding, class-label embedding, SiT transformer blocks, and final velocity head. In the main experiments, $S$ is loaded from a pretrained SiT-S/2 checkpoint and frozen. During subsequent flow training, both the learned path network and the subtractor are frozen; together they define the training states and tangent targets for the flow model $v_\phi$.

\paragraph{Dual-stem path network.}
The path network $P_\psi$ uses the same SiT transformer body as $S$, but replaces the single spatial input stem with two stems. The first stem embeds the hatted linear state and the second stem embeds the hatted endpoint displacement:
\[
    z_{\lin}=\mathrm{Embed}_{\lin}(\hat\gamma_{\lin}(t)),
    \qquad
    z_{\Delta}=\mathrm{Embed}_{\Delta}(\hat\Delta(t)).
\]
The token sequence entering the transformer is
\begin{equation}
    z=z_{\lin}+z_{\Delta}+z_{\mathrm{pos}},
    \label{eq:app-dual-stem-tokens}
\end{equation}
where $z_{\mathrm{pos}}$ is the fixed sinusoidal positional embedding used by SiT. The usual SiT conditioning vector is
\[
    c=e_t(t)+e_y(y),
\]
where $e_t$ is the timestep embedding and $e_y$ is the class-label embedding. When endpoint conditioning is enabled, the delta-stem tokens are also average pooled and passed through a small projector,
\begin{equation}
    c=e_t(t)+e_y(y)
    +W_{\mathrm{end}}\,\mathrm{SiLU}\!\left(
        \mathrm{LayerNorm}\!\left(\frac{1}{N}\sum_{i=1}^N z_{\Delta,i}\right)
    \right).
    \label{eq:app-endpoint-conditioner}
\end{equation}
This global endpoint-conditioning branch is the \texttt{EndpointConditionProjector} module, and the full path network is implemented as \texttt{DualStemPathSiT}. The transformer blocks and final head are otherwise the same as in the base SiT velocity model.

\paragraph{Initialization and exact linear startup.}
The path network $P_\psi$ and subtractor $S$ are initialized from the same pretrained SiT checkpoint. Loading the checkpoint into $P_\psi$ proceeds as follows. The vanilla SiT input stem is copied into the linear stem $\mathrm{Embed}_{\lin}$. The timestep embedder, label embedder, positional embedding, transformer blocks, and final layer are copied unchanged. The new delta stem $\mathrm{Embed}_{\Delta}$ is zero-initialized, and the final linear layer of the endpoint-conditioning projector is also zero-initialized. Consequently, at initialization,
\begin{equation}
    P_\psi(\hat\gamma_{\lin}(t),\hat\Delta(t),t,y)
    =S(\hat\gamma_{\lin}(t),t,y),
    \label{eq:app-path-subtractor-cancel}
\end{equation}
provided both networks are loaded from the same checkpoint. Therefore the bracketed residual in \eqref{eq:app-implemented-path} is exactly zero and
\[
    \gamma_\psi^{\mathrm{impl}}(t;x_0,x_1,y,\epsilon_0)=\gamma_{\lin}(t;x_0,x_1)
\]
at the start of path training. This exact startup property is why we use the path network and subtractor pair rather than directly learning an unconstrained displacement from scratch.

\section{Diagnostic correlations for path overfitting}
\label{app:fid-diagnostic-correlations}

In Figures \ref{fig:L_fm_overfitting}--\ref{fig:fid_overfitting} and \ref{fig:app-fid-last-bucket-divergence}--\ref{fig:app-fid-path-geometry}, we use the same settings as SiT-S in Table~\ref{tab:model_comparison} other than changes to $\beta$ and $\rho$ noted in each figure's legends. We provide more detailed descriptions for Figures \ref{fig:L_fm_overfitting}--\ref{fig:fid_overfitting}: curves with $\rho_t=1$ remove stochastic source regularization and vary kinetic regularization through $\beta$, while $\rho_t=0.5$ and $\rho_t=1-t$ are stochastic variants at $\beta=0.05$. Stronger kinetic regularization alone reduces (but does not prevent) late degradation.

This appendix provides the scatter-plot analysis behind the diagnostic discussion in Sections~\ref{sec:path-overfitting} and~\ref{sec:divergence-entropy}. All points are SiT-S runs evaluated with 50k FID. Each plot contains checkpoints after 4k, 8k, and 12k flow-training steps, with marker shape indicating the step and color indicating the algorithmic variant. The variants sweep the kinetic energy regularization strength \(\beta\), the source-perturbation schedule \(\rho_t\), and an additional acceleration-regularized ablation. The acceleration-regularized run adds
\[
    \lambda\,\E\int_0^1
    \norm{\partial_t^2\gamma_\psi(t;x_0,x_1,y)}_2^2\,dt
\]
to the path objective, with \(\lambda=2\times 10^{-5}\). Larger values of \(\lambda\) performed poorly in our preliminary sweeps and are not included here. The vertical axis is truncated to the FID range \([8,11]\), which is the range of interest; higher FID values correspond to extremely bad path-overfitting runs.

We emphasize these plots as qualitative diagnostics rather than as a statistical correlation study. The number of runs is small and the points are coupled through common checkpoints and training schedules. Nevertheless, the comparison is useful: among the measured quantities, the last-bucket divergence gives the clearest visual correlation with FID.

\begin{figure}[t]
\centering
\includegraphics[width=0.72\linewidth]{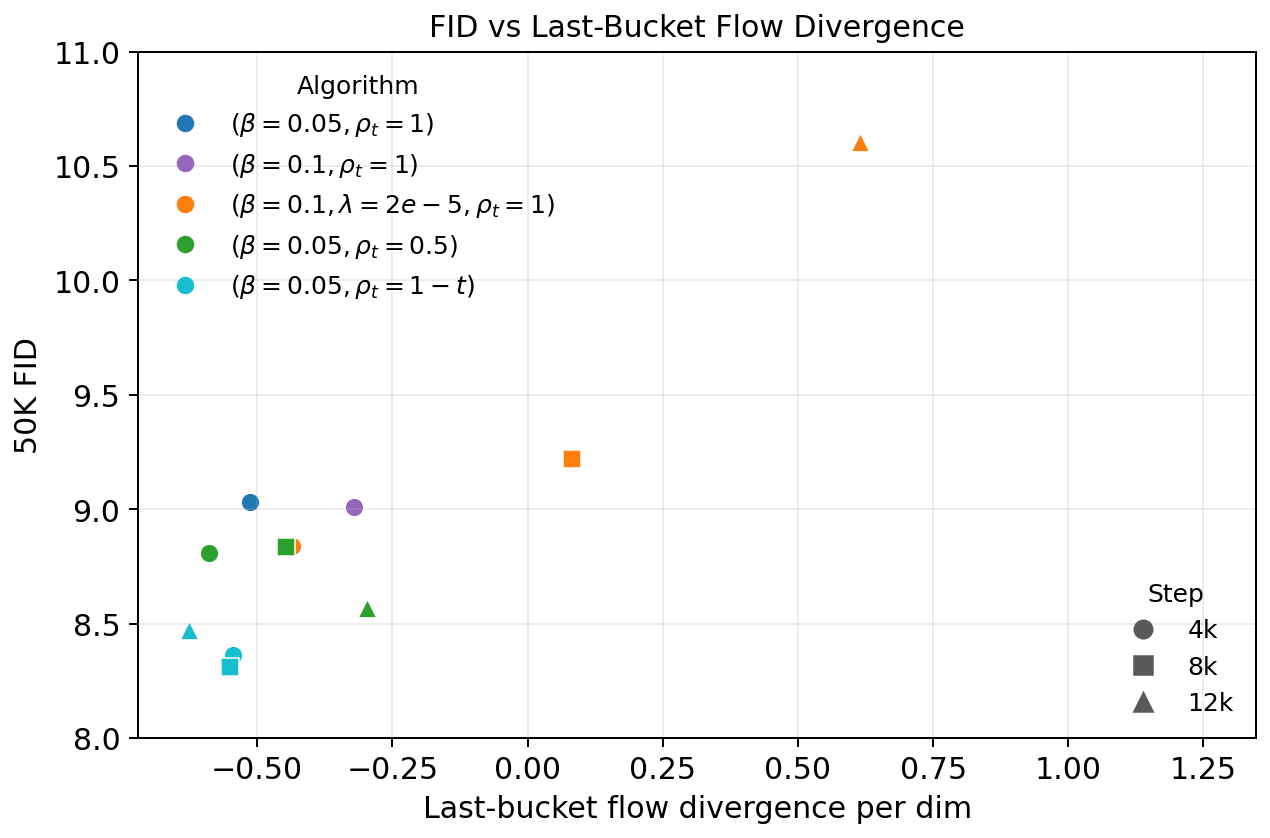}
\caption{FID versus per-dimensional flow divergence in the last time bucket. The last bucket corresponds to the final bin of the ten-bin estimator in \eqref{eq:binned-divergence-diagnostic}. Runs with large positive last-bucket divergence are precisely the high-FID overfit checkpoints in this sweep, while the best checkpoints keep the final-bin divergence negative or only mildly positive.}
\label{fig:app-fid-last-bucket-divergence}
\end{figure}

Figure~\ref{fig:app-fid-last-bucket-divergence} shows the strongest qualitative relationship. The unregularized or weakly regularized checkpoints that degrade in FID develop a large positive final-bin divergence, consistent with the entropy-bottleneck picture in Section~\ref{sec:divergence-entropy}. In contrast, the stochastic source-regularized checkpoints maintain low FID while avoiding a large positive late-time divergence spike. The acceleration-regularized baseline can reduce some geometric quantities, but it does not reliably remove this late positive-divergence behavior.

\begin{figure}[t]
\centering
\begin{minipage}[t]{0.48\textwidth}
    \centering
    \includegraphics[width=\linewidth]{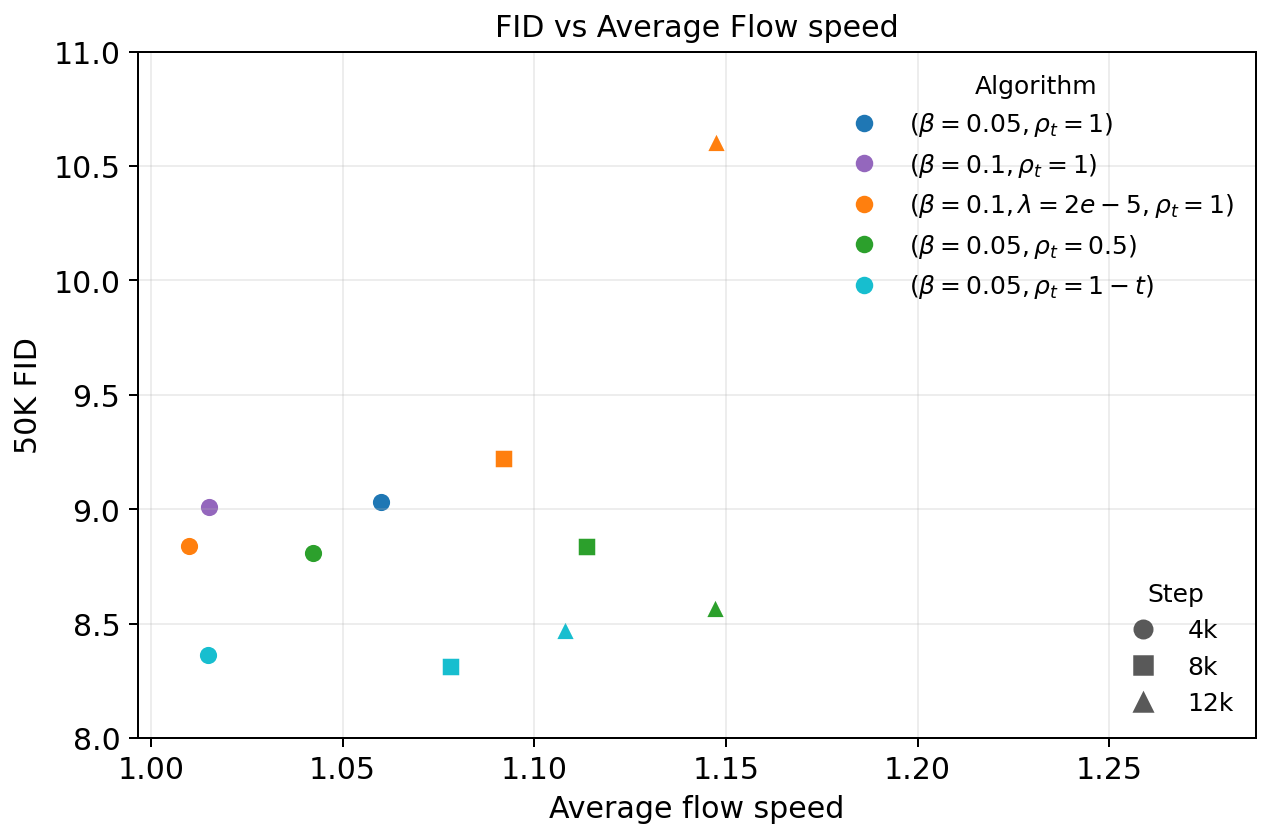}
    \centerline{\small (a) Average flow Euclidean norm.}
\end{minipage}
\hfill
\begin{minipage}[t]{0.48\textwidth}
    \centering
    \includegraphics[width=\linewidth]{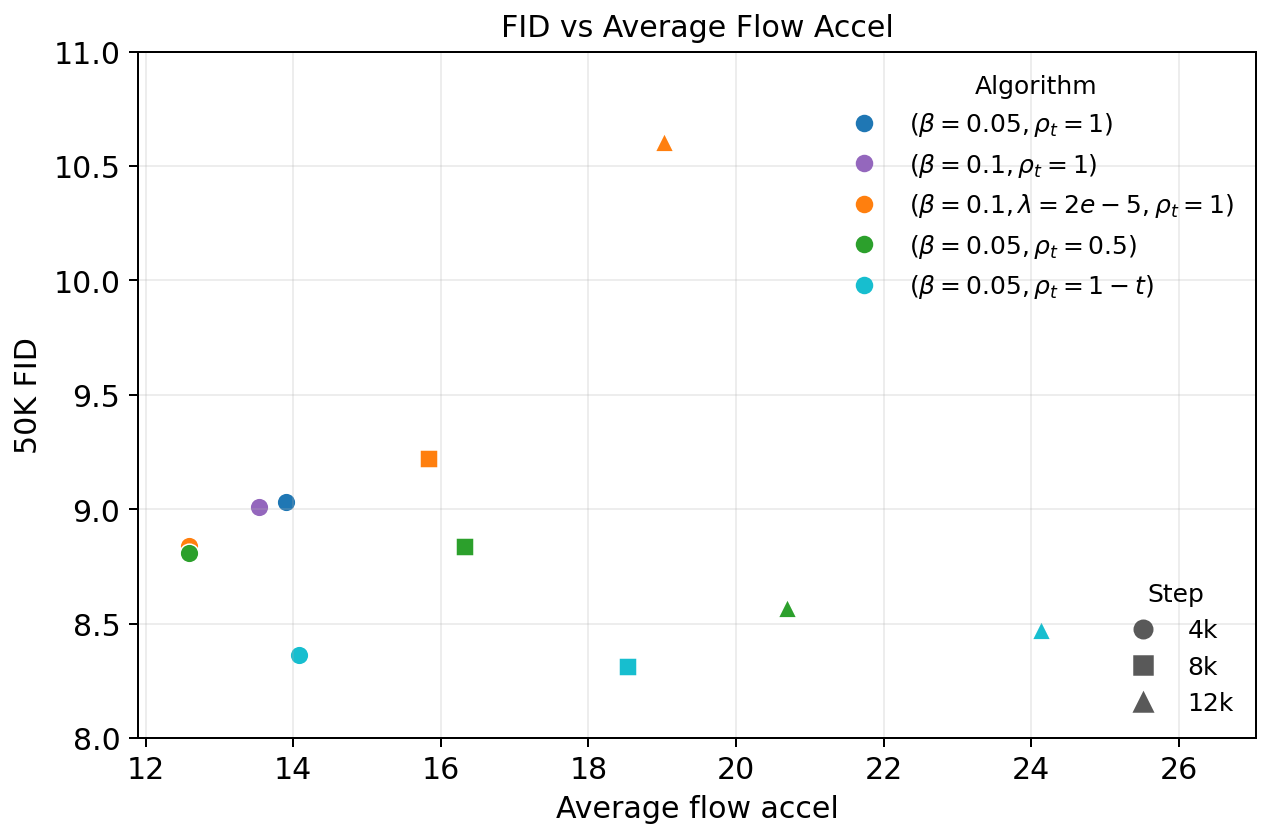}
    \centerline{\small (b) Average flow acceleration.}
\end{minipage}
\caption{FID versus global flow-geometry summaries. Unlike the last-bucket divergence in Figure~\ref{fig:app-fid-last-bucket-divergence}, average flow speed and acceleration are not monotone indicators of sample quality. Some low-FID stochastic source-regularized checkpoints have comparable or larger average flow speed and acceleration than higher-FID unregularized checkpoints.}
\label{fig:app-fid-flow-geometry}
\end{figure}

\begin{figure}[t]
\centering
\begin{minipage}[t]{0.48\textwidth}
    \centering
    \includegraphics[width=\linewidth]{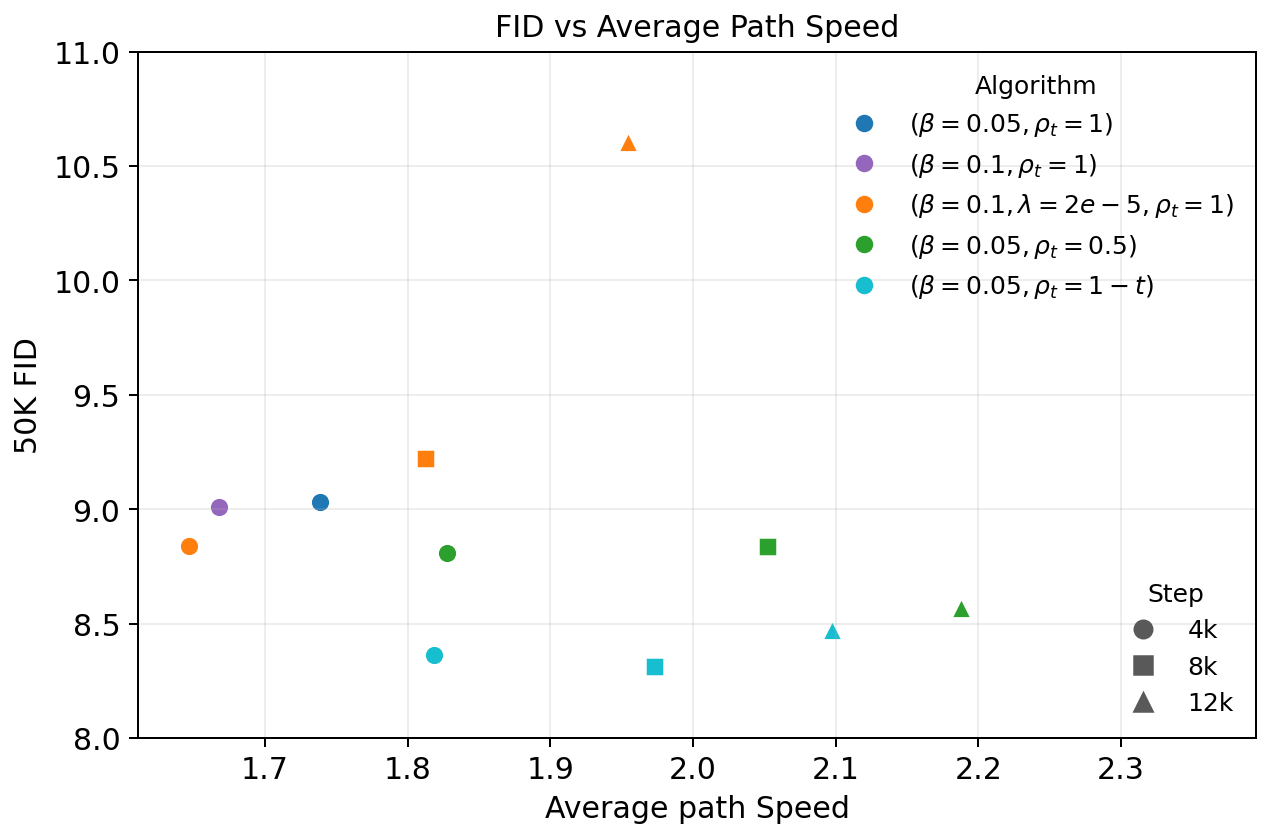}
    \centerline{\small (a) Average path Euclidean norm.}
\end{minipage}
\hfill
\begin{minipage}[t]{0.48\textwidth}
    \centering
    \includegraphics[width=\linewidth]{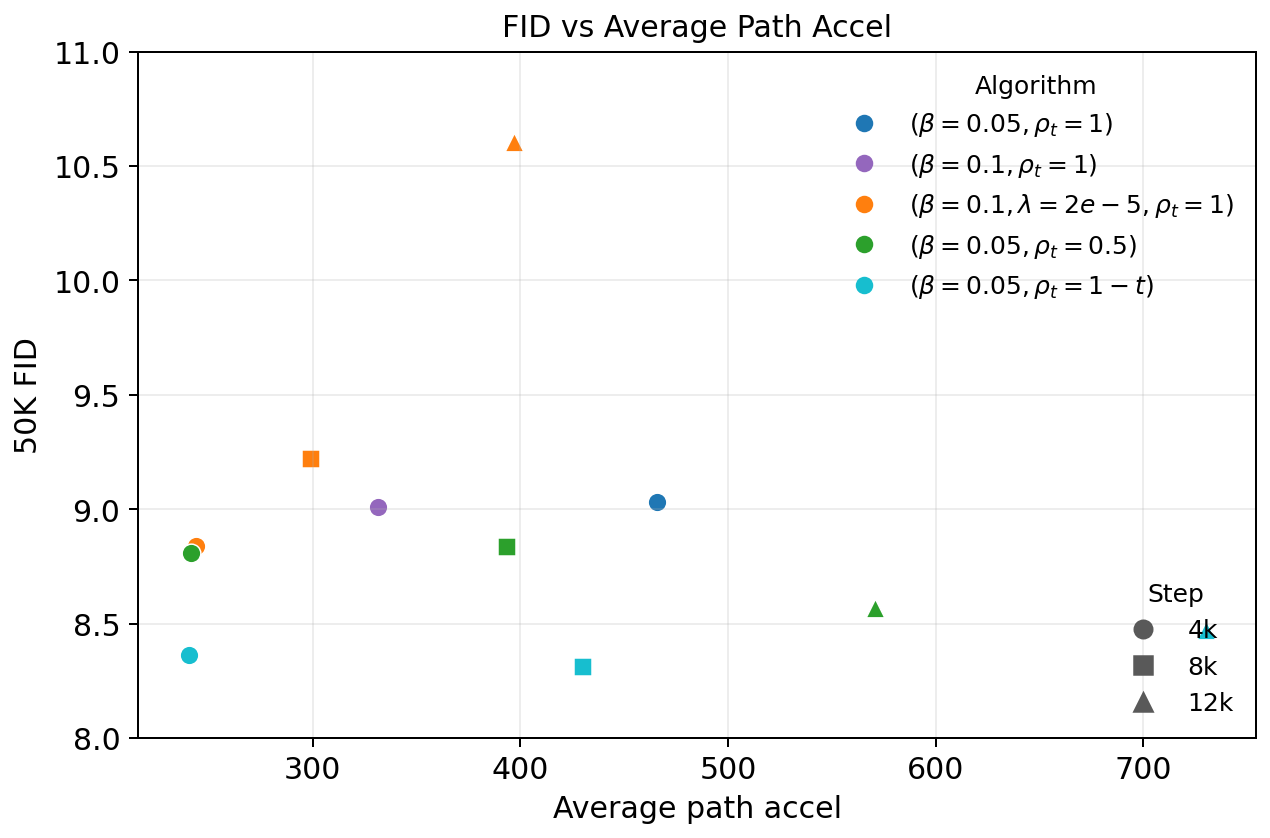}
    \centerline{\small (b) Average path acceleration.}
\end{minipage}
\caption{FID versus global path-geometry summaries. The path-speed and path-acceleration summaries also do not explain the FID ranking on their own. In particular, good stochastic source-regularized checkpoints can have larger path speed or acceleration than worse unregularized checkpoints. This supports the view that the key failure mode is distributional---a late entropy-expansion spike---rather than simply excessive path length or curvature.}
\label{fig:app-fid-path-geometry}
\end{figure}

Taken together, Figures~\ref{fig:app-fid-last-bucket-divergence}--\ref{fig:app-fid-path-geometry} highlight that the divergence spike appears to be a stronger predictor of path overfitting, compared to other geometric quantities such as speed and acceleration of the path or the flow. The late-time divergence statistic is more directly tied to the entropy-rate identity of Proposition~\ref{prop:entropy-rate}, and in this sweep it is the clearest diagnostic of FID degradation.

\section{Proofs for Key Results}
\subsection{Proof of Proposition \ref{prop:fm-projection}}
\label{ss:proof:prop:fm-projection}

\begin{proof}[Proof of Proposition~\ref{prop:fm-projection}]
We first prove the population regression statement. All expectations are taken with respect to the chosen endpoint coupling conditional on $y$, the auxiliary randomness $\xi$, and $t\sim\Unif[0,1]$. Let
\[
    \mathcal G
    =
    \sigma\!\left(\gamma(t;x_0,x_1,y,\xi),t,y\right)
\]
be the $\sigma$-algebra generated by the point on the path, the time, and the label. For any square-integrable vector field $v_\phi$, the random variable
\[
    v_\phi(\gamma(t;x_0,x_1,y,\xi),t,y)
\]
is $\mathcal G$-measurable. By the tower property,
\begin{equation}
\begin{aligned}
    \mathcal L_{\FM}(v_\phi\mid\gamma)
    &=
    \E\Big[
        \norm{v_\phi(\gamma(t;x_0,x_1,y,\xi),t,y)
        -\dot\gamma(t;x_0,x_1,y,\xi)}_2^2
    \Big] \\
    &=
    \E\Big[
        \E\big[
        \norm{v_\phi(\gamma(t;x_0,x_1,y,\xi),t,y)
        -\dot\gamma(t;x_0,x_1,y,\xi)}_2^2
        \,\big|\,
        \mathcal G
        \big]
    \Big].
\end{aligned}
\label{eq:conditional-risk-tower}
\end{equation}

Fix a realization of $\mathcal G$, say
\[
    \gamma(t;x_0,x_1,y,\xi)=x,
    \qquad t=s,
    \qquad y=\bar y.
\]
Conditional on this event, the value of the vector field is a deterministic vector $a=v_\phi(x,s,\bar y)$. The conditional risk is
\[
    \E\!\left[
        \norm{a-\dot\gamma(t;x_0,x_1,y,\xi)}_2^2
        \;\middle|\;
        \gamma(t;x_0,x_1,y,\xi)=x,\,t=s,\,y=\bar y
    \right].
\]
Let
\[
    m(x,s,\bar y)
    =
    \E\!\left[
        \dot\gamma(t;x_0,x_1,y,\xi)
        \;\middle|\;
        \gamma(t;x_0,x_1,y,\xi)=x,\,t=s,\,y=\bar y
    \right].
\]
Then
\begin{align*}
    &\E\!\left[
        \norm{a-\dot\gamma(t;x_0,x_1,y,\xi)}_2^2
        \;\middle|\;
        \gamma(t;x_0,x_1,y,\xi)=x,\,t=s,\,y=\bar y
    \right] \\
    &\qquad =
    \norm{a-m(x,s,\bar y)}_2^2
    +
    \E\!\left[
        \norm{\dot\gamma(t;x_0,x_1,y,\xi)-m(x,s,\bar y)}_2^2
        \;\middle|\;
        \gamma(t;x_0,x_1,y,\xi)=x,\,t=s,\,y=\bar y
    \right],
\end{align*}
because the conditional mean of
$\dot\gamma(t;x_0,x_1,y,\xi)-m(x,s,\bar y)$ is zero under the same conditioning. The second term does not depend on $a$, while the first term is uniquely minimized by $a=m(x,s,\bar y)$. Therefore the population minimizer is
\[
    v_\gamma^\star(x,t,y)
    =
    \E\!\left[
        \dot\gamma(t;x_0,x_1,y,\xi)
        \;\middle|\;
        \gamma(t;x_0,x_1,y,\xi)=x,\,t,\,y
    \right]
\]
for $p_{\gamma,t}(\cdot\mid y)\,dt$-almost every $(x,t)$ and for the relevant labels $y$. This proves the first part of the proposition. Notice that no endpoint-preservation assumption was used.

The same conditional-expectation argument also gives the projection identity
\begin{equation}
\begin{aligned}
    \mathcal L_{\FM}(v_\phi\mid\gamma)
    &=
    \E\Big[
        \norm{
        v_\phi(\gamma(t;x_0,x_1,y,\xi),t,y)
        -v_\gamma^\star(\gamma(t;x_0,x_1,y,\xi),t,y)
        }_2^2
    \Big] \\
    &\quad+
    \E\Big[
        \norm{
        \dot\gamma(t;x_0,x_1,y,\xi)
        -v_\gamma^\star(\gamma(t;x_0,x_1,y,\xi),t,y)
        }_2^2
    \Big].
\end{aligned}
\label{eq:fm-projection}
\end{equation}
Indeed, expand
\[
\begin{aligned}
    &\dot\gamma(t;x_0,x_1,y,\xi)
    -v_\phi(\gamma(t;x_0,x_1,y,\xi),t,y) \\
    &\qquad =
    \big(
        \dot\gamma(t;x_0,x_1,y,\xi)
        -v_\gamma^\star(\gamma(t;x_0,x_1,y,\xi),t,y)
    \big) \\
    &\qquad\quad+
    \big(
        v_\gamma^\star(\gamma(t;x_0,x_1,y,\xi),t,y)
        -v_\phi(\gamma(t;x_0,x_1,y,\xi),t,y)
    \big).
\end{aligned}
\]
The cross term has expectation zero because the second factor is $\mathcal G$-measurable and
\[
    \E\!\left[
        \dot\gamma(t;x_0,x_1,y,\xi)
        -v_\gamma^\star(\gamma(t;x_0,x_1,y,\xi),t,y)
        \mid \mathcal G
    \right]=0.
\]
This proves \eqref{eq:fm-projection}.

We now prove the ODE transport statement. Fix $y$ and assume the additional hypotheses in the second part of Proposition~\ref{prop:fm-projection}. In particular, the Borel, Lipschitz, and linear-growth conditions imply that $v_\gamma^\star(\cdot,\cdot,y)$ generates a unique Carath\'eodory flow. Let $f\in C_c^\infty(\R^d)$ be a test function. Differentiating under the expectation gives
\[
    \frac{d}{dt}\E[f(\gamma(t;x_0,x_1,y,\xi))\mid y]
    =
    \E[\inner{\grad f(\gamma(t;x_0,x_1,y,\xi))}
    {\dot\gamma(t;x_0,x_1,y,\xi)}\mid y].
\]
Conditioning on $\gamma(t;x_0,x_1,y,\xi)=x$ and using the definition of $v_\gamma^\star$,
\[
    \frac{d}{dt}\int f(x)p_{\gamma,t}(x\mid y)\,dx
    =
    \int \inner{\grad f(x)}{v_\gamma^\star(x,t,y)}p_{\gamma,t}(x\mid y)\,dx.
\]
This is the weak form of the continuity equation associated with the velocity field $v_\gamma^\star$. The law of the Carath\'eodory flow
\[
    \frac{d}{dt}z(t)=v_\gamma^\star(z(t),t,y)
\]
satisfies the same continuity equation. By endpoint preservation and $x_0\sim p_0$, the path marginal at $t=0$ is $p_0$, so the ODE and the path have the same initial law. Under the Lipschitz and linear-growth assumptions of Proposition~\ref{prop:fm-projection}, the associated probability solution is unique. Hence the ODE marginals coincide with the path marginals for every $t\in[0,1]$. Finally, endpoint preservation and $x_1\sim p_1(\cdot\mid y)$ imply that the path marginal at $t=1$ is $p_1(\cdot\mid y)$, so $z(1)\sim p_1(\cdot\mid y)$.
\end{proof}

\subsection{Proof of Proposition \ref{prop:entropy-rate}}
\label{ss:proof:prop:entropy-rate}

\begin{proof}
The differential entropy is
\[
    h(p_t(\cdot\mid y))=-\int p_t(x\mid y)\log p_t(x\mid y)\,dx.
\]
Differentiating and using $\int \partial_t p_t(x\mid y)\,dx=0$ gives
\[
    \frac{d}{dt}h(p_t(\cdot\mid y))
    =
    -\int \partial_t p_t(x\mid y)\log p_t(x\mid y)\,dx.
\]
Substituting $\partial_t p_t(x\mid y)
    +
    \divx\!\left(p_t(x\mid y)u_t(x,y)\right)
    =0,$ yields
\[
    \frac{d}{dt}h(p_t(\cdot\mid y))
    =
    \int \divx(p_t u_t)\log p_t\,dx.
\]
Integrating by parts and using the decay assumption,
\[
    \frac{d}{dt}h(p_t(\cdot\mid y))
    =
    -\int p_t(x\mid y)u_t(x,y)\cdot \grad\log p_t(x\mid y)\,dx
    =
    \int p_t(x\mid y)\divx u_t(x,y)\,dx.
\]
This proves \eqref{eq:generic-entropy-rate}. Integrating over $[a,b]$ proves \eqref{eq:generic-binned-entropy-divergence}.
\end{proof}

\subsection{Proof of Theorem \ref{thm:source-randomization-smoothing}}
\label{sec:proof:thm:source-randomization-smoothing}
\begin{proof}
For the fixed time $t$, the pair $(x_0,\hat x_0(t))$ is jointly Gaussian and satisfies
\[
    \mathrm{Cov}(x_0)=I_d,
    \qquad
    \mathrm{Cov}(\hat x_0(t))=I_d,
    \qquad
    \mathrm{Cov}(x_0,\hat x_0(t))=\rho_t I_d.
\]
The variable $\eta_t$ in \eqref{eq:eta-t-definition} is therefore jointly Gaussian with $\hat x_0(t)$. A direct covariance calculation gives
\[
    \mathrm{Cov}(\eta_t)=I_d,
    \qquad
    \mathrm{Cov}(\eta_t,\hat x_0(t))=0.
\]
Hence, conditional on $y$, $\eta_t\sim\Ncal(0,I_d)$ and $\eta_t$ is independent of $\hat x_0(t)$. Under the independent endpoint coupling, $x_1$ is independent of $(x_0,\epsilon)$ conditional on $y$, so $\eta_t$ is conditionally independent of $(\hat x_0(t),x_1)$ given $y$. Rearranging \eqref{eq:eta-t-definition} gives
\[
    x_0=\rho_t\hat x_0(t)+\sqrt{1-\rho_t^2}\,\eta_t.
\]
Substituting this identity into \eqref{eq:time-dependent-rho-interpolant} yields \eqref{eq:rho-gamma-convolution-form}. Since the last term $\sigma_t\eta_t$ is Gaussian and independent of the non-Gaussian term \eqref{eq:qrho-pushforward-variable}, the conditional law of their sum is the convolution \eqref{eq:rho-convolution}.

The entropy floor follows by conditioning on the non-Gaussian term in \eqref{eq:qrho-pushforward-variable}. After this conditioning, the only remaining randomness in the state is the independent Gaussian $\sigma_t\eta_t$, whose entropy is
\[
    \frac d2\log(2\pi e\sigma_t^2).
\]
Removing the conditioning can only increase entropy by a nonnegative mutual-information term, giving \eqref{eq:entropy-floor}.

It remains to explain the Fisher-information bound. Let $\varphi_{\sigma_t}$ denote the density of $\Ncal(0,\sigma_t^2I_d)$. By the convolution representation,
\[
    \hat p_{\psi,t}(x\mid y)
    =\int \varphi_{\sigma_t}(x-r)\,dq_{\psi,t}(r\mid y).
\]
Differentiating under the integral gives
\[
    \grad \hat p_{\psi,t}(x\mid y)
    =-\frac{1}{\sigma_t^2}
      \int (x-r)\varphi_{\sigma_t}(x-r)\,dq_{\psi,t}(r\mid y).
\]
Equivalently, using the representation
$\hat\gamma_\psi(t)=r+\sigma_t\eta_t$ from \eqref{eq:rho-gamma-convolution-form},
\begin{equation}
    \grad\log\hat p_{\psi,t}(x\mid y)
    =-\frac{1}{\sigma_t}
      \E\left[\eta_t\mid \hat\gamma_\psi(t;x_0,x_1,y,\epsilon)=x,\,y\right].
    \label{eq:gaussian-posterior-score-identity}
\end{equation}
Therefore Jensen's inequality gives
\[
\begin{aligned}
    I(\hat p_{\psi,t}(\cdot\mid y))
    &=\frac{1}{\sigma_t^2}
      \E\left[
      \norm{\E[\eta_t\mid \hat\gamma_\psi(t),y]}_2^2
      \mid y\right] \\
    &\le
      \frac{1}{\sigma_t^2}\E[\norm{\eta_t}_2^2]
      =\frac{d}{\sigma_t^2}.
\end{aligned}
\]
This proves \eqref{eq:fisher-gaussian-smoothing-bound}.
\end{proof}

\subsection{Proof of Proposition \ref{prop:kinetic-entropy-rate-bound}}
\label{sec:proof:prop:kinetic-entropy-rate-bound}
\begin{proof}
By the definition of the stochastic-path population target in \eqref{eq:rho-ideal-flow}, the ideal marginal velocity is
\[
    \hat v_\psi^\star(x,t,y)
    =
    \E\left[
        \partial_t\hat\gamma_\psi(t;x_0,x_1,y,\epsilon)
        \;\middle|\;
        \hat\gamma_\psi(t;x_0,x_1,y,\epsilon)=x,\,t,\,y
    \right].
\]
Applying Proposition~\ref{prop:entropy-rate} to the probability flow generated by $\hat v_\psi^\star$ gives
\[
    \frac{d}{dt}h(\hat p_{\psi,t}(\cdot\mid y))
    =
    \E_{x\sim\hat p_{\psi,t}(\cdot\mid y)}
    [\divx \hat v_\psi^\star(x,t,y)].
\]
Integrating by parts,
\[
\begin{aligned}
    \frac{d}{dt}h(\hat p_{\psi,t}(\cdot\mid y))
    &=-
    \E_{x\sim\hat p_{\psi,t}(\cdot\mid y)}
    \left[
        \inner{\hat v_\psi^\star(x,t,y)}
        {\grad\log\hat p_{\psi,t}(x\mid y)}
    \right] \\
    &\ge
    -
    \sqrt{
        \E_{x\sim\hat p_{\psi,t}(\cdot\mid y)}
        \norm{\hat v_\psi^\star(x,t,y)}_2^2
    }
    \sqrt{I(\hat p_{\psi,t}(\cdot\mid y))},
\end{aligned}
\]
where the inequality is Cauchy--Schwarz.

The first factor is controlled by the path kinetic energy. Indeed, by Jensen's inequality for conditional expectation,
\[
\begin{aligned}
    \E_{x\sim\hat p_{\psi,t}(\cdot\mid y)}
    \norm{\hat v_\psi^\star(x,t,y)}_2^2
    &\le
    \E\left[
        \norm{\partial_t\hat\gamma_\psi(t;x_0,x_1,y,\epsilon)}_2^2
        \;\middle|\; y
    \right] \\
    &=L_{\euc}^{\rho,\psi}(t,y).
\end{aligned}
\]
The second factor is controlled by Gaussian smoothing: Theorem~\ref{thm:source-randomization-smoothing} gives
\[
    I(\hat p_{\psi,t}(\cdot\mid y))
    \le
    \frac{d}{(1-t)^2(1-\rho_t^2)}.
\]
Combining the last three displays proves \eqref{eq:entropy-rate-kinetic-bound}.
\end{proof}

\section{Experimental details}
\label{sec:experimental_details_appx}

We implement Algorithm~\ref{alg:simple_alternating_minimization} with flow $v_\phi$ initialized from already-trained SiT~\citep{ma2024sit} checkpoints of the appropriate size and path ($P_\psi$ and $S$) initialized from pretrained checkpoints. Our main experiments use ImageNet-$256\times256$ \citep{imagenet_dataset}. Our setting is post-training of a pretrained flow model. For SiT-S/B/L, we finetune the pretrained checkpoints until FID plateaus and use the resulting checkpoint to initialize both the standard continuation baseline and our method. For SiT-XL, finetuning does not improve the pretrained checkpoint FID, so the pretrained checkpoint is the initialization for the flow network $v_\phi$.

Our path network $R_\psi$ is constructed from a pre-trained flow network, as described in Appendix \ref{app:path-subtractor-implementation}. For the FM experiments in Table~\ref{tab:model_comparison}, we use the pretrained SiT-S checkpoint to initialize $P_\psi$ and $S$. For the MG experiments in Table~\ref{tab:fid_50k_mg}, we use the pretrained SiT-B-MG checkpoint to initialize $P_\psi$ and $S$.

\paragraph{Experimental details for FM experiments in Table \ref{tab:model_comparison}.} In each cycle we finetune path $R_\psi$ using maximum learning rate $3\times 10^{-4}$ with cosine decay schedule for 3000 steps, batch size 280, $\beta=0.05$, and $\rho_t = c\cdot (1-t)$ where $c=1$ for SiT-S,B,L and $c=0.7$ for SiT-XL. In each cycle we finetune flow $v_\phi$ using constant learning rate $5\times 10^{-6}$ for 4000 steps with no EMA, batch size 768, and inherit the same $\rho_t$ from the path tuning stage. To ensure a fair comparison, we apply standard finetuning to the pretrained baseline SiT models using both the default pretraining settings reported in SiT \citep{ma2024sit} and our settings for finetuning the flow network $v_\phi$, and report the best FID for the baseline given a fixed finetuning budget. Applying standard finetuning to SiT-S,B,L gives improvements to the FID, whereas finetuning SiT-XL results in worse FID. Experiments are conducted on H200 GPUs. We compute FID~\citep{fid_seminal} and Inception-Score (IS)~\citep{IS_seminal} on 50k images for each checkpoint using the Heun ODE sampler at 250 NFE. We tune the classifier-free guidance strength (CFG)~\citep{cfg_seminal} separately for the SiT baseline and our method, generally noticing higher CFG compared to the baseline working better for Algorithm~\ref{alg:simple_alternating_minimization}. We use CFG $=\{3, 2, 1.5, 1.5\}$ for baseline SiT-S,B,L,XL models, and CFG = $\{3.5, 2.5, 2.0, 1.7\}$ for our SiT-S,B,L,XL models. To ensure a fair comparison, we sweep CFG on the same grid for both the SiT baseline and our method, and report FID with best CFG.

\paragraph{Experimental details for MG experiments in Table~\ref{tab:fid_50k_mg}.}
For the model-guidance experiments, Algorithm~\ref{alg:simple_alternating_minimization} uses three alternating cycles, each with 3k path-tuning steps followed by 4k MG-flow-tuning steps. Both SiT-B-MG and SiT-XL-MG use $\beta=0.05$, flow EMA decay $0.999$, and MG-flow tuning with batch size 768 and learning rate $5\times10^{-6}$. For both models, the path network $P_\psi$ and subtractor $S$ are initialized from the pretrained SiT-B-MG checkpoint and use batch size 140. The SiT-B-MG run uses a constant source-perturbation $\rho_t=0.5$ and path learning rate $3\times10^{-4}$, while SiT-XL-MG uses $\rho_t=0.7(1-t)$ and the path learning rate is $5\times10^{-5}$. For a fair comparison, the standard MG fine-tuning baselines use the same batch size and learning rate as MG-flow tuning. All entries are evaluated with 50k generated samples using the Heun ODE sampler with 250 NFE and CFG $=1.0$.

\begin{table}[!h]
\centering
\caption{Full ImageNet-$256\times256$ results corresponding to Table~\ref{tab:model_comparison}. The baseline columns report the initial checkpoint and the best FID reached by standard fine-tuning within the 12k continuation budget. Our method is evaluated after each alternating cycle.}
\label{tab:model_comparison_full}
\resizebox{\linewidth}{!}{
\begin{tabular}{lcccccccccc}
\toprule
Model & Params & \multicolumn{2}{c}{FID (baseline) $\downarrow$} & \multicolumn{3}{c}{FID (ours) $\downarrow$} & \multicolumn{1}{c}{IS (baseline) $\uparrow$} & \multicolumn{3}{c}{IS (ours) $\uparrow$} \\
\cmidrule(lr){3-4} \cmidrule(lr){5-7} \cmidrule(lr){8-8} \cmidrule(lr){9-11}
 & & Init & Best & 4k & 8k & 12k & Init & 4k & 8k & 12k \\
\midrule
SiT-S  & 33M  & 9.109 & 9.022 & 8.375 & \textbf{8.359} & 8.492  & 173.6 & \textbf{188.9} & 187.2 & 185.4 \\
SiT-B  & 130M & 5.074 & 5.057 & 4.703 & 4.480 & \textbf{4.415} & 199.0 & \textbf{234.9} & 229.0 & 225.5 \\
SiT-L  & 458M & 3.424 & 3.390 & 3.014 & 2.882 & \textbf{2.742} & 201.1 & \textbf{274.8} & 271.5 & 267.2 \\
SiT-XL & 675M & 2.091 & 2.091 & 1.909 & 1.845 & \textbf{1.825} & 256.1 & 282.9 & 285.6 & \textbf{288.6} \\
\bottomrule
\end{tabular}
}
\end{table}

\begin{table}[!h]
\centering
\caption{Checkpoint-wise model-guidance FID results corresponding to Table~\ref{tab:fid_50k_mg}. Both methods start from the same model-guidance checkpoint; lower is better.}
\label{tab:fid_50k_mg_full}
\begin{tabular}{llcccc}
\toprule
Model & Method & 0k & 4k & 8k & 12k \\
\midrule
\multirow{2}{*}{SiT-B-MG (130M)}
  & Ours        & 8.090 & 6.704 & 5.844 & \textbf{5.585} \\
  & MG baseline & 8.090 & 8.011 & 7.940 & 7.950 \\
\midrule
\multirow{2}{*}{SiT-XL-MG (675M)}
  & Ours        & 1.522 & 1.336 & \textbf{1.322} & 1.333 \\
  & MG baseline & 1.522 & 1.494 & 1.504 & 1.501 \\
\bottomrule
\end{tabular}
\end{table}

\begin{table}[!h]
\centering
\caption{Training efficiency measured in steps per second per H200 GPU.}
\label{tab:training-efficiency}
\begin{tabular}{lccc}
\toprule
Model & Alg~\ref{alg:simple_alternating_minimization} (Path Phase)\ & Alg~\ref{alg:simple_alternating_minimization} (Flow Phase) & Standard flow matching\\
\midrule
SiT-S & 1.86 & 1.39 & 4.65 \\
SiT-B & 1.70 & 1.02 & 2.08 \\
SiT-L & 0.85 & 0.51 & 0.71 \\
SiT-XL & 0.70 & 0.37 & 0.50 \\
\bottomrule
\end{tabular}
\end{table}

\paragraph{Efficiency.} Table~\ref{tab:training-efficiency} compares the efficiency of training flow and path networks in Algorithm~\ref{alg:simple_alternating_minimization} to standard flow matching training. The additional cost of using $R_\psi$ during flow training causes our flow training to be less efficient per step. However, the relative cost of $R_\psi$ decreases at larger flow models because $R_\psi$ has constant size across S/B/L/XL. All experiments are conducted on H200 GPUs, and our code supports multi-GPU training and image generation. Appendix~\ref{app:compute-matched-continuation} reports complete GPU-hour- and FLOP-matched experiments.

\subsection{Compute-matched baseline continuation}
\label{app:compute-matched-continuation}

Using our method, three training cycles contain 9k path updates and 12k learned-path flow updates. For SiT-S/B/XL, these cycles cost 1.435/2.546/8.811 EFLOPs, whereas 12k updates of standard continuation cost 0.335/1.272/6.559 EFLOPs. We extend each baseline to the larger of the GPU-hour- and FLOP-matched budgets. GPU-hour matching gives the larger budget for all three models sizes.

\begin{table}[!ht]
\centering
\small
\caption{GPU-hour matching for three training cycles. GPU-hours are measured on H200 GPUs.}
\label{tab:compute-match-gpu-hours}
\begin{tabular}{lccc}
\toprule
Model & Ours GPU-h (path + flow) & Matched baseline steps & Baseline GPU-h \\
\midrule
SiT-S  & 3.574 (1.210 + 2.364) & 61.1k & 3.579 \\
SiT-B  & 4.667 (1.468 + 3.199) & 36.6k & 4.677 \\
SiT-XL & 12.216 (3.546 + 8.670) & 22.8k & 12.243 \\
\bottomrule
\end{tabular}
\end{table}

\begin{table}[!ht]
\centering
\small
\caption{Compute-matched ImageNet-$256\times256$ continuation. FID is computed from 50k images using the Heun sampler at 250 NFE. The reduction is relative to the stronger standard baseline.}
\label{tab:compute-matched-fid}
\begin{tabular}{lccccc}
\toprule
Model & Initialization & Baseline best & Compute-matched & Ours & Reduction \\
\midrule
SiT-S  & 9.109 & 9.022 & 9.025 & \textbf{8.359} & 7.3\% \\
SiT-B  & 5.074 & 5.057 & 4.981 & \textbf{4.415} & 11.4\% \\
SiT-XL & 2.091 & 2.091 & 2.147 & \textbf{1.825} & 12.7\% \\
\bottomrule
\end{tabular}
\end{table}

Longer standard continuation improves only SiT-B beyond its original baseline, while path--flow training gives lower FID for all three model sizes. The inference architecture, NFE, and sampling cost are unchanged. We also evaluated the performance of 12k-step continuation with the original SiT hyperparameters. For SiT-B, our tuned baseline reaches 5.057 FID, compared with 5.533 under the original hyperparameters and 5.074 at initialization. For SiT-XL, the two settings reach 2.134 and 2.095, respectively, compared with 2.091 at initialization.

The cost of path--flow alignment applied to a pretrained checkpoint is small relative to pretraining: SiT-XL/2 pretraining uses an estimated 1275.4 EFLOPs, more than $140\times$ the 8.811 EFLOPs used by three cycles of path--flow training.

\subsection{CFG selection and additional metrics}
\label{app:cfg-and-pr}

We select classifier-free guidance (CFG) independently for each baseline and path--flow checkpoint using 10k generated images and the adaptive Dopri5 sampler. Note that headline evaluations use 50k-FID.

\begin{table}[!ht]
\centering
\scriptsize
\caption{FID on the 10k-image CFG-selection sweeps. Bold and underline identify the selected value for each checkpoint.}
\label{tab:cfg-fid-sweeps}
\begin{minipage}[t]{0.48\linewidth}
\centering
\begin{tabular}{lcccc}
\toprule
\multicolumn{5}{c}{\textbf{SiT-S}} \\
 & CFG 2 & CFG 3 & CFG 3.5 & CFG 4 \\
\midrule
Baseline & 18.742 & \textbf{\underline{13.094}} & 13.099 & 13.967 \\
Ours     & 20.364 & 11.463 & \textbf{\underline{10.988}} & 11.439 \\
\bottomrule
\end{tabular}
\end{minipage}
\hfill
\begin{minipage}[t]{0.48\linewidth}
\centering
\begin{tabular}{lcccc}
\toprule
\multicolumn{5}{c}{\textbf{SiT-B}} \\
 & CFG 1 & CFG 2 & CFG 2.5 & CFG 3 \\
\midrule
Baseline & 31.481 & \textbf{\underline{8.794}} & 9.157 & 11.126 \\
Ours     & 43.778 & 9.576 & \textbf{\underline{7.286}} & 8.051 \\
\bottomrule
\end{tabular}
\end{minipage}

\vspace{4pt}
\begin{minipage}[t]{0.48\linewidth}
\centering
\begin{tabular}{lcccc}
\toprule
\multicolumn{5}{c}{\textbf{SiT-L}} \\
 & CFG 1 & CFG 1.5 & CFG 2 & CFG 2.5 \\
\midrule
Baseline & 17.612 & \textbf{\underline{6.992}} & 7.745 & 10.828 \\
Ours     & 24.455 & 8.050 & \textbf{\underline{5.606}} & 7.666 \\
\bottomrule
\end{tabular}
\end{minipage}
\hfill
\begin{minipage}[t]{0.48\linewidth}
\centering
\begin{tabular}{lcccc}
\toprule
\multicolumn{5}{c}{\textbf{SiT-XL}} \\
 & CFG 1 & CFG 1.5 & CFG 1.7 & CFG 2 \\
\midrule
Baseline & 12.446 & \textbf{\underline{4.977}} & 5.271 & 6.771 \\
Ours     & 14.796 & 5.062 & \textbf{\underline{4.787}} & 5.651 \\
\bottomrule
\end{tabular}
\end{minipage}
\end{table}

The sweeps select CFG 3/2/1.5/1.5 for the SiT-S/B/L/XL baselines and 3.5/2.5/2/1.7 for the corresponding path--flow checkpoints. We use the best CFG settings for each model size and method in our 50k-FID headline results.

In Table~\ref{tab:precision-recall-headline} we additionally report precision and recall corresponding to the SiT checkpoints used for headline results in Table~\ref{tab:model_comparison}. As baseline, we generate using the model with the best FID at every SiT model size. 

\begin{table}[!ht]
\centering
\scriptsize
\caption{Precision and recall corresponding to the 50k-image headline FID results in Table~\ref{tab:fid_50k_mg}, generated using Heun at 250 NFE. Bold marks the better value within each model and metric.}
\label{tab:precision-recall-headline}
\begin{minipage}[t]{0.48\linewidth}
\centering
\begin{tabular}{lcc}
\toprule
\multicolumn{3}{c}{\textbf{SiT-S}} \\
 & Precision $\uparrow$ & Recall $\uparrow$ \\
\midrule
Baseline & \textbf{0.874} & 0.283 \\
Ours     & 0.860 & \textbf{0.297} \\
\bottomrule
\end{tabular}
\end{minipage}
\hfill
\begin{minipage}[t]{0.48\linewidth}
\centering
\begin{tabular}{lcc}
\toprule
\multicolumn{3}{c}{\textbf{SiT-B}} \\
 & Precision $\uparrow$ & Recall $\uparrow$ \\
\midrule
Baseline & \textbf{0.850} & 0.441 \\
Ours     & 0.826 & \textbf{0.474} \\
\bottomrule
\end{tabular}
\end{minipage}

\vspace{4pt}
\begin{minipage}[t]{0.48\linewidth}
\centering
\begin{tabular}{lcc}
\toprule
\multicolumn{3}{c}{\textbf{SiT-L}} \\
 & Precision $\uparrow$ & Recall $\uparrow$ \\
\midrule
Baseline & \textbf{0.819} & 0.542 \\
Ours     & 0.806 & \textbf{0.564} \\
\bottomrule
\end{tabular}
\end{minipage}
\hfill
\begin{minipage}[t]{0.48\linewidth}
\centering
\begin{tabular}{lcc}
\toprule
\multicolumn{3}{c}{\textbf{SiT-XL}} \\
 & Precision $\uparrow$ & Recall $\uparrow$ \\
\midrule
Baseline & \textbf{0.814} & 0.581 \\
Ours     & 0.791 & \textbf{0.611} \\
\bottomrule
\end{tabular}
\end{minipage}
\end{table}

\FloatBarrier 

Recall increases when comparing our method to the baseline, while precision decreases. 

To examine how both metrics vary with guidance, Table~\ref{tab:cfg-pr-sweeps} reports how precision and recall change with respect to CFG using 10k images. Bold marks the best value in each row, and underline marks the CFG selected by FID for the headline evaluation.

\begin{table}[!ht]
\centering
\scriptsize
\caption{Precision and recall across 10k-image CFG sweeps using the Dopri5 sampler.}
\label{tab:cfg-pr-sweeps}
\begin{tabular}{llcccc}
\toprule
\multicolumn{6}{c}{\textbf{SiT-S}} \\
Metric & Method & CFG 2 & CFG 3 & CFG 3.5 & CFG 4 \\
\midrule
Precision $\uparrow$ & Baseline & 0.850 & \textbf{\underline{0.875}} & 0.872 & 0.868 \\
                      & Ours     & 0.831 & \textbf{0.858} & \underline{0.857} & 0.852 \\
Recall $\uparrow$    & Baseline & 0.360 & \textbf{\underline{0.364}} & 0.354 & 0.332 \\
                      & Ours     & 0.385 & \textbf{0.433} & \underline{0.416} & 0.407 \\
\addlinespace[5pt]
\multicolumn{6}{c}{\textbf{SiT-B}} \\
Metric & Method & CFG 1 & CFG 2 & CFG 2.5 & CFG 3 \\
\midrule
Precision $\uparrow$ & Baseline & 0.766 & \textbf{\underline{0.853}} & 0.848 & 0.840 \\
                      & Ours     & 0.732 & 0.811 & \textbf{\underline{0.824}} & 0.815 \\
Recall $\uparrow$    & Baseline & 0.380 & \underline{0.523} & \textbf{0.525} & 0.506 \\
                      & Ours     & 0.326 & 0.559 & \underline{0.578} & \textbf{0.580} \\
\addlinespace[5pt]
\multicolumn{6}{c}{\textbf{SiT-L}} \\
Metric & Method & CFG 1 & CFG 1.5 & CFG 2 & CFG 2.5 \\
\midrule
Precision $\uparrow$ & Baseline & 0.776 & \underline{0.816} & \textbf{0.822} & 0.813 \\
                      & Ours     & 0.745 & 0.788 & \textbf{\underline{0.798}} & 0.794 \\
Recall $\uparrow$    & Baseline & 0.518 & \underline{0.606} & \textbf{0.616} & 0.604 \\
                      & Ours     & 0.484 & 0.601 & \underline{0.655} & \textbf{0.663} \\
\addlinespace[5pt]
\multicolumn{6}{c}{\textbf{SiT-XL}} \\
Metric & Method & CFG 1 & CFG 1.5 & CFG 1.7 & CFG 2 \\
\midrule
Precision $\uparrow$ & Baseline & 0.776 & \textbf{\underline{0.811}} & 0.809 & 0.808 \\
                      & Ours     & 0.752 & 0.789 & \textbf{\underline{0.795}} & 0.790 \\
Recall $\uparrow$    & Baseline & 0.564 & \underline{0.653} & 0.663 & \textbf{0.671} \\
                      & Ours     & 0.576 & 0.671 & \underline{0.688} & \textbf{0.693} \\
\bottomrule
\end{tabular}
\end{table}

At the CFG selected for each path--flow checkpoint, recall changes from 0.354/0.525/0.616/0.663 to 0.416/0.578/0.655/0.688 for SiT-S/B/L/XL, while precision changes from 0.872/0.848/0.822/0.809 to 0.857/0.824/0.798/0.795. At the lowest tested CFG, path--flow training has lower precision for all four sizes; recall is lower for SiT-B/L and slightly higher for SiT-S/XL.

\paragraph{Multi-seed evaluation.}
\label{app:generation-seed-robustness}
In Table~\ref{tab:generation-seed-robustness} we evaluate each fixed checkpoint on three sets of images generated from three disjoint sets of random seeds. The ranges for the random seeds are 0-49,999, 50k-99,999, 100k-149,999.

\begin{table}[!ht]
\centering
\small
\caption{Mean $\pm$ sample standard deviation of 50k-image FID across three generation-seed ranges, using Heun at 250 NFE.}
\label{tab:generation-seed-robustness}
\begin{tabular}{lcc}
\toprule
Model & Baseline best & Ours best \\
\midrule
SiT-S  & $9.142 \pm 0.13$ & $\mathbf{8.383 \pm 0.041}$ \\
SiT-B  & $5.093 \pm 0.041$ & $\mathbf{4.470 \pm 0.054}$ \\
SiT-L  & $3.458 \pm 0.061$ & $\mathbf{2.777 \pm 0.031}$ \\
SiT-XL & $2.103 \pm 0.026$ & $\mathbf{1.830 \pm 0.015}$ \\
\bottomrule
\end{tabular}
\end{table}

The mean FID is lower for path--flow training at every model size. Each gap is at least $5.8\times$ the larger sample standard deviation observed for that model.

\subsection{Applying our method in more scenarios}

\paragraph{Using our method on FlowDCN.}
\label{app:flowdcn-transfer}
To broaden the applicability of our method, we apply three cycles of path--flow training to a pretrained 120M-parameter FlowDCN-B model~\citep{FlowDCN}, which has a purely convolutional backbone. We select CFG using 10k-FID (dopri5 sampler, see Table~\ref{tab:flowdcn-cfg}), and compare with 25.9k steps of FLOP-matched standard continuation. Final FID is computed from 50k images using the Heun sampler at 250 NFE.

\begin{table}[!ht]
\centering
\small
\caption{FlowDCN-B 10k-image CFG sweep using the 100-step adaptive Dopri5 sampler.}
\label{tab:flowdcn-cfg}
\begin{tabular}{lccc}
\toprule
Checkpoint & CFG 1.5 & CFG 2.0 & CFG 2.5 \\
\midrule
Baseline & 11.890 & \textbf{9.142} & 11.230 \\
Ours     & 15.734 & \textbf{8.204} & 8.765 \\
\bottomrule
\end{tabular}
\end{table}

The sweep selects CFG 2.0 for both checkpoints. At the chosen CFG, Table~\ref{tab:flowdcn-final} compares the performance of the pretrained model, compute-matched continuation, and our method's three-cycle finetuned checkpoint. Path--flow training reaches 5.473 FID, a 7.0\% improvement over the best FLOP-matched continuation result of 5.888, demonstrating that our path--flow alignment method can be applied to various architectures in addition to SiT~\citep{ma2024sit}. 

\begin{table}[!ht]
\centering
\small
\caption{Performance of path--flow alignment using FlowDCN-B (Ours best) compared to continued finetuning (Baseline best), 50k-FID (Heun sampler 250 NFE).}
\label{tab:flowdcn-final}
\begin{tabular}{lccccc}
\toprule
Model & Params & Init FID & Baseline best & Ours best & FID gain \\
\midrule
FlowDCN-B & 120M & 6.489 & 5.888 & \textbf{5.473} & 7.0\% \\
\bottomrule
\end{tabular}
\end{table}

\paragraph{Comparison to~\citep{shankar2025learningstraightflows}  on CIFAR-10.} 
\label{app:cifar-controlled-comparison}
We compare with Learning Straight Flows~\citep{shankar2025learningstraightflows} by initializing both our path--flow alignment method and \citep{shankar2025learningstraightflows}'s method from a SiT-CIFAR/2 checkpoint pretrained for 200k steps. This is a smaller variant of SiT adapted for CIFAR-10 that we pretrained. To ensure fair comparisons, we use matched training FLOPs: our path--flow alignment uses 32.888 PFLOPs, whereas our reproduction of Learning Straight Flows~\citep{shankar2025learningstraightflows} and CIFAR-10 baseline continuation use at least as much FLOPs as was allocated to our method. We compare the methods using 10k-FID at CFG 2.0 and the ODE Euler sampler at 2 and 40 NFE.

Our implementation of Learning Straight Flows~\citep{shankar2025learningstraightflows} includes its empirical target-field estimator, joint flow and straightness objective, and a four-block GLOW/PLU invertible interpolant. We use 100 queries and a balanced 500-image candidate pool. Since we are unable to access the original code and some CIFAR-10 hyperparameters are unavailable, we are unable to reproduce~\citep{shankar2025learningstraightflows} exactly.

\begin{table}[!ht]
\centering
\small
\caption{FLOP-matched CIFAR-10 comparison using 10k images from one generation seed. Lower FID is better.}
\label{tab:cifar-citation17-comparison}
\resizebox{\linewidth}{!}{
\begin{tabular}{lcc}
\toprule
Method & FID, 2 NFE $\downarrow$ & FID, 40 NFE $\downarrow$ \\
\midrule
Our method (1 cycle) & 170.845 & \textbf{9.097} \\
Baseline continuation & 100.518 (step 1,000) & 10.339 (step 1,500) \\
\citep{shankar2025learningstraightflows}, $\lambda_{\max}=0.001$, GLOW LR $5\times10^{-6}$ & \textbf{79.150} & 11.667 \\
\citep{shankar2025learningstraightflows}, $\lambda_{\max}=0.001$, GLOW LR $5\times10^{-7}$ & 84.230 & 11.568 \\
\citep{shankar2025learningstraightflows}, $\lambda_{\max}=0.0003$, GLOW LR $5\times10^{-7}$ & 94.295 & 11.878 \\
\bottomrule
\end{tabular}
}
\end{table}

The standard-continuation row reports the best FID at each NFE achieved during continued finetuning. We report the FID at the end of training for our path--flow alignment and \citep{shankar2025learningstraightflows}. Learning Straight Flows gives the best result at 2 NFE, while path--flow alignment gives the best result at 40 NFE. Intuitively, our method learns a more curved trajectory (see Figure~\ref{fig:toy-learned-trajectories}) whereas \citep{shankar2025learningstraightflows} penalizes the curvature of the resulting trajectory at the cost of high-NFE generation, leading our method to perform better at higher NFE.

\subsection{Qualitative comparisons with our method}

\paragraph{Qualitative samples.}
\label{app:qualitative-samples}
Figure~\ref{fig:sit-xl-qualitative} compares images generated by the pretrained SiT-XL checkpoint with the cycle-3 checkpoint of our path--flow alignment. The grids are uncurated and matched by class and generation seed.

\begin{figure}[!ht]
\centering
\begin{minipage}[t]{0.49\linewidth}
    \centering
    \includegraphics[width=\linewidth]{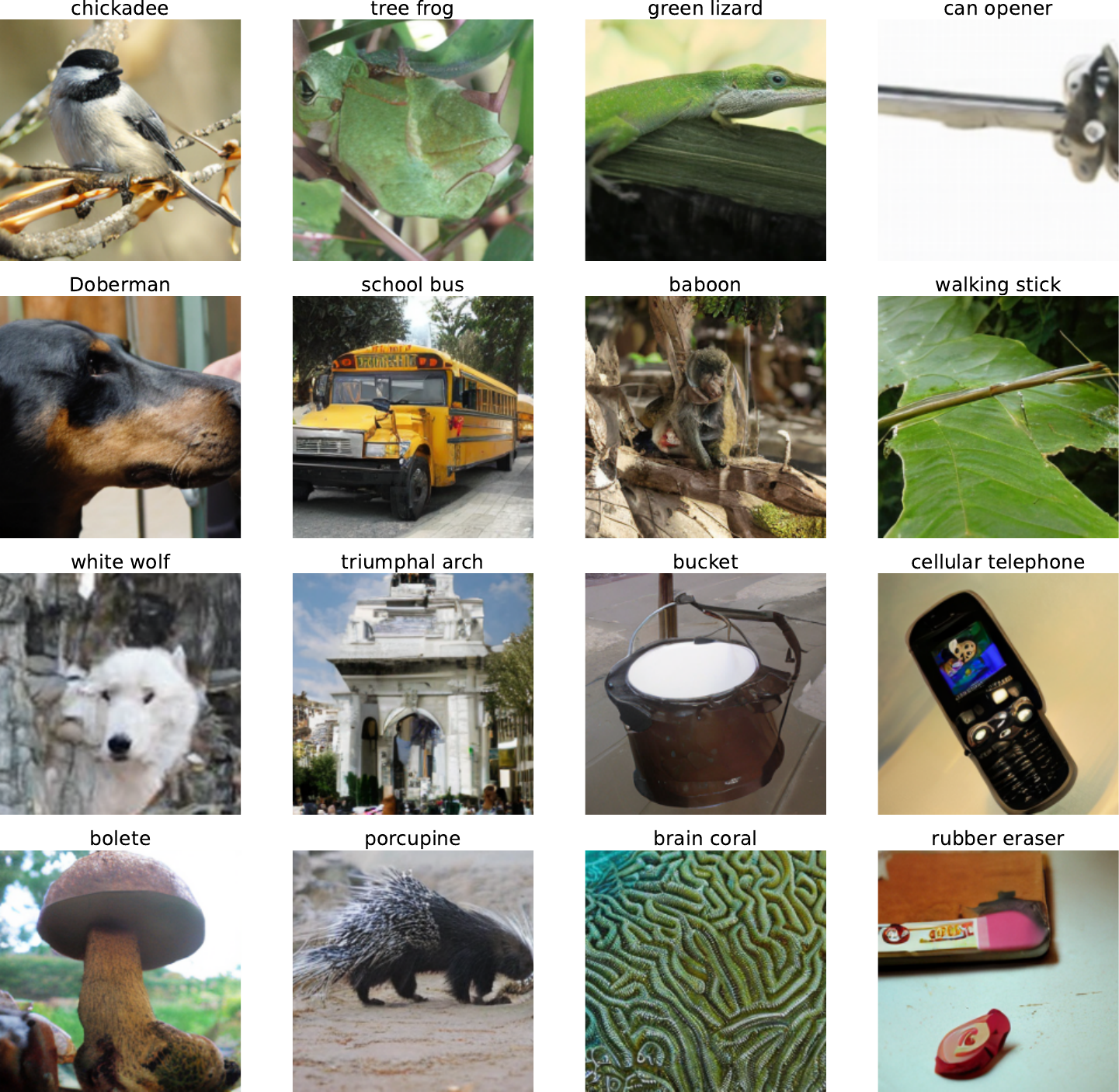}
    \small (a) Best SiT-XL baseline
\end{minipage}
\hfill
\begin{minipage}[t]{0.49\linewidth}
    \centering
    \includegraphics[width=\linewidth]{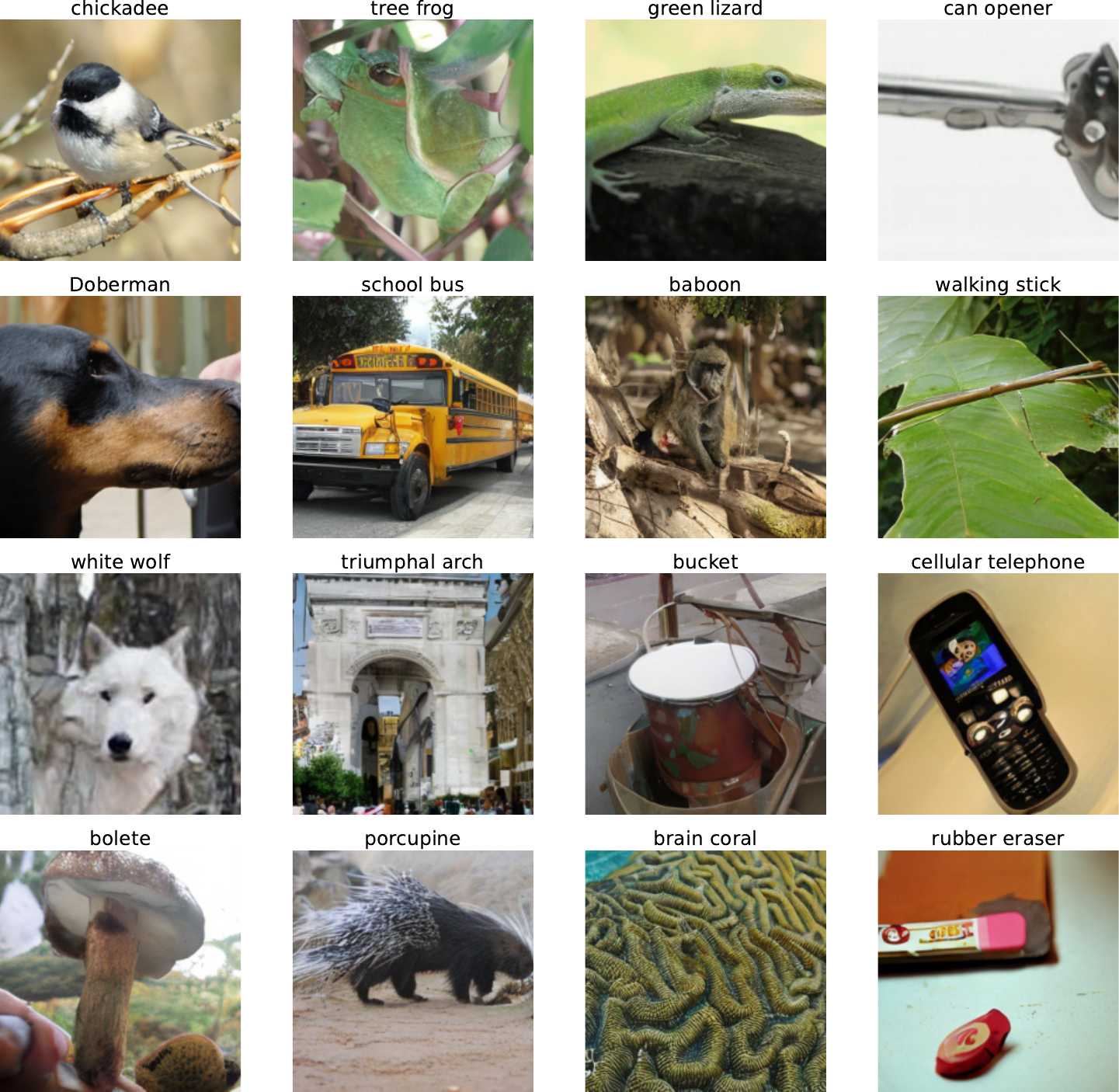}
    \small (b) Ours after three cycles
\end{minipage}
\caption{Uncurated SiT-XL samples matched by class and generation seed. Left: the best baseline, given by the framework initialization. Right: path--flow alignment after three cycles.}
\label{fig:sit-xl-qualitative}
\end{figure}

Most corresponding samples remain similar in class, composition, pose, and background, indicating that our path--flow alignment method refines the pretrained model rather than substantially changing its overall behavior. 

Several samples nevertheless show \textbf{clearer local structure with our method}. At (1,3), the green lizard has more clearly defined feet and contact with the surface. At (2,3), the baboon has a more coherent head and body. At (3,1), the white wolf has a better-formed face. At (3,2), our result is recognizably a triumphal arch, whereas the baseline resembles an irregular building with a small opening. At (4,4), the rubber eraser is cleanly separated from the surrounding material rather than bleeding into it.

\paragraph{Visualizing learned trajectories.}
\label{app:learned-trajectories}
Figure~\ref{fig:toy-learned-trajectories} visualizes the trajectories from a toy experiment under standard flow matching, unregularized path learning, and stochastic path learning. Our method induces a trajectory with higher curvature compared to standard flow matching. 

Our toy example transports an independently coupled two-component Gaussian mixture centered at $(-5,\pm1)$ to one centered at $(5,\pm1)$; each component has isotropic standard deviation $0.3$. We first train a standard flow-matching teacher on linear paths, then initialize the $\rho=1$ and $\rho_t=1-t$ branches from this teacher and run two cycles of alternating path and flow updates. The plotted curves integrate the same 600 source samples with Heun over 240 steps, allowing their geometry to be compared directly.

\ifdefined\arxivversion
\else
\newpage
\input{checklist.tex}
\fi
\end{document}